\def\eqref#1{equation~\ref{#1}}
\def\1{\bm{1}}
\DeclareMathAlphabet{\mathsfit}{\encodingdefault}{\sfdefault}{m}{sl}
\SetMathAlphabet{\mathsfit}{bold}{\encodingdefault}{\sfdefault}{bx}{n}
\def\gA{{\mathcal{A}}}
\def\gD{{\mathcal{D}}}
\def\gF{{\mathcal{F}}}
\def\gL{{\mathcal{L}}}
\def\gN{{\mathcal{N}}}
\def\gS{{\mathcal{S}}}
\def\gT{{\mathcal{T}}}
\newcommand{\E}{\mathbb{E}}
\newcommand{\R}{\mathbb{R}}
\newcommand{\KL}{D_{\mathrm{KL}}}
\newcommand{\TV}{D_{\mathrm{TV}}}
\newcommand{\Var}{\mathrm{Var}}
\title{Learning from Sparse Offline Datasets via Conservative Density Estimation}
\author{
  Zhepeng Cen$^1$, Zuxin Liu$^1$, Zitong Wang$^2$, Yihang Yao$^1$, Henry Lam$^2$, Ding Zhao$^1$ \\
  $^1$Carnegie Mellon University, $^2$ Columbia University\\
  \texttt{\{zcen, zuxinl, yihangya\}@andrew.cmu.edu}\\
  \texttt{\{zw2690, henry.lam\}@columbia.edu, dingzhao@cmu.edu} \\
}
\newtheorem{theorem}{Theorem}
\newtheorem{proposition}{Proposition}
\newtheorem{assumption}{Assumption}
\newtheorem{lemma}{Lemma}
\begin{document}

\maketitle

\begin{abstract}
  Offline reinforcement learning (RL) offers a promising direction for learning policies from pre-collected datasets without requiring further interactions with the environment. However, existing methods struggle to handle out-of-distribution (OOD) extrapolation errors, especially in sparse reward or scarce data settings. In this paper, we propose a novel training algorithm called Conservative Density Estimation (CDE), which addresses this challenge by explicitly imposing constraints on the state-action occupancy stationary distribution.  
  CDE overcomes the limitations of existing approaches, such as the stationary distribution correction method, by addressing the support mismatch issue in marginal importance sampling. 
  Our method achieves state-of-the-art performance on the D4RL benchmark. Notably, CDE consistently outperforms baselines in challenging tasks with sparse rewards or insufficient data, demonstrating the advantages of our approach in addressing the extrapolation error problem in offline RL. Code is available at \url{https://github.com/czp16/cde-offline-rl}. 
\end{abstract}

\section{Introduction}

Reinforcement Learning (RL) has witnessed remarkable advancements in recent years \citep{akkaya2019solving, kiran2021deep}. Nevertheless, the success of RL relies on continuous online interactions, resulting in high sample complexity and potentially restricting its practical applications in real-world scenarios~\citep{levine2016end, gu2022review}. As a compelling solution, offline RL has been brought to the fore, with the objective of learning effective policies from pre-existing datasets, thereby eliminating the necessity for further environment interactions~\citep{fu2020d4rl, prudencio2023survey}.

Despite its benefits, offline RL is not devoid of challenges, most notably the out-of-distribution (OOD) extrapolation errors, which emerge when the agent encounters state-actions that were absent in the dataset. These issues pose significant hurdles when learning policies from datasets with sparse rewards or low coverage of state-action spaces~\citep{levine2020offline}. To address OOD estimation errors in value-based offline RL, current efforts primarily revolve around two strategies: pessimism-based methods~\citep{xie2021bellman,shi2022pessimistic} and the integration of regularizations~\citep{kostrikov2021offline}. However, these approaches hinge on assumptions of the behavior policy. In addition, many works push policy to behavior policy to achieve pessimism~\citep{fujimoto2019off,shi2022pessimistic}, which is more challenging to select the level of pessimism when the data distribution estimation is difficult ~\citep{liu2020provably,xie2021bellman} (e.g., in high-dimensional state-action space), while regularization methods may struggle with the tuning of the regularization coefficient~\citep{lee2021optidice, lyu2022mildly}. As such, striking the optimal balance of conservativeness remains a challenging goal particularly in sparse-reward settings.

Recent attention has been drawn towards an alternative method that employs importance sampling (IS) for offline data distribution correction~\citep{precup2000eligibility, jiang2016doubly}. Among these, Distribution Correction-Estimation (DICE)-based methods have garnered substantial interest, which uses a single marginal ratio to reweight rewards for each state-action pair and has a relatively low estimation variance~\citep{nachum2019algaedice, zhang2020gendice, lee2021optidice}. DICE provides a behavior-agnostic estimation of stationary distributions, presenting a more direct approach for offline learning. However, DICE-based techniques rely on an implicit assumption of the dataset's concentrability\citep{munos2007performance,xie2021policy, li2022settling}, otherwise the stationary distribution support mismatch between the dataset and policy can cause an arbitrarily large IS ratio, resulting in unstable training and poor performance, which can be significantly severe with insufficient data.


To address these challenges, we introduce a novel method, the Conservative Density Estimation (CDE), that integrates the strengths of both pessimism-based and DICE-based approaches. CDE employs the principles of conservative Q-learning \citep{kumar2020conservative} in a unique way, incorporating pessimism within the stationary distribution space to achieve a theoretically-backed conservative occupation distribution. 
On the one hand, CDE does not rely on Bellman update-style value estimation, favoring a direct behavior-policy-agnostic stationary distribution correction that improves performance in \textit{sparse reward scenarios}.
On the other hand, by constraining the density of the stationary distribution induced by OOD state-action pairs, CDE significantly enhances performance in \textit{data-limited settings}. This stands in contrast to the significant performance degradation observed in baseline offline RL methods with diminishing dataset sizes, as CDE maintains high rewards even with only $\mathbf{1}$\textbf{\%} \textbf{trajectories} in challenging D4RL tasks~\citep{fu2020d4rl}.


\noindent 1. We introduce the first approach to explicitly apply pessimism in the stationary distribution space. Notably, CDE outperforms conservative value learning-based approaches in sparse reward settings and demonstrates superior performance over DICE-based methods in handling scarce data situations.


\noindent 2. We present a method that automatically bounds the concentrability coefficient without resorting to the common concentrability assumption~\citep{rashidinejad2021bridging, shi2022pessimistic, zhan2022offline}, underlining its robustness in managing the OOD extrapolation issue inherent in offline RL. 

\noindent 3. We demonstrate the resilience of CDE in maintaining high rewards even with significantly reduced dataset sizes, such as $1\%$ of trajectories, while prior methods fail. Therefore, our method provides a viable solution for real-world applications where data can be scarce or costly to obtain.

\vspace{-3mm}
\section{Related Work}
\label{sec:related}
\textbf{Offline RL with regularization or constraints.}
To mitigate OOD issues, Q-value-based methods are often enhanced with regularization or constraint terms~\citep{levine2020offline, prudencio2022survey}. These techniques restrict the learned policy's deviation from the behavior policy in the dataset, whether through explicit constrained policy spaces~\citep{fujimoto2019off} or regularizers in the objective~\citep{wu2019behavior, peng2019advantage, kumar2019stabilizing, nair2020awac, fujimoto2021minimalist}. 
Alternatively, value regularization is employed to yield lower estimates for unseen states or actions, resulting in conservative policies~\citep{kostrikov2021offline, kumar2020conservative}. However, those methods may suffer instability from approximation error when learning value with Bellman update iteratively~\citep{fujimoto2018addressing, fu2019diagnosing, brandfonbrener2021offline}, often failing in sparse reward settings even with expert demonstrations. Meanwhile, the reliance on heuristic regularization can lead to overly conservative policies and degrade performance.

\textbf{Offline RL with marginal importance sampling.}
The DICE method represents a class of approaches that directly address distribution shift using marginal importance sampling, offering reduced estimation variance compared to naive importance weighting \citep{precup2000eligibility}. These methods reframe the learning objective as maximizing expected reward, using the primal-dual correspondence between value-function linear programming and distribution optimization~\citep{nachum2019algaedice, nachum2020reinforcement}. DICE calculates the importance ratio using either a forward method that minimizes the residual error of the transposed Bellman equation~\citep{zhang2020gendice}, or a backward method that optimizes the value function via duality~\citep{nachum2019dualdice}. Some variations add a regularization to the objective function, yielding a closed-form solution for the importance ratio~\citep{nachum2019algaedice, lee2021optidice}. Despite their ability to provide unbiased policy evaluation, DICE-style methods yield arbitrarily large importance ratios when the dataset lacks sufficient state-action space coverage, a challenge particularly acute in scarce data settings.

\section{Method}
\vspace{-3pt}
\subsection{Preliminaries}
We formulate reinforcement learning problem in the context of a Markov Decision Process (MDP) $\mathcal{M}=\langle \gS, \gA, T, r, \gamma, \rho_0\rangle$, where $\gS$ is the state space, $\gA$ is the action space, $T:\gS\times\gA\times\gS\to[0,1]$ specifies the transition probability $T(s'|s,a)$, $r:\gS\times\gA\to\R$ is the reward function, $\gamma$ is the discount factor, and $\rho_0:\gS\to[0,1]$ is the initial state distribution. The policy $\pi:\gS\times\gA \to [0,1]$ maps from a state to a distribution over actions. Given a policy $\pi$, consider the trajectory $\tau=\{s_0,a_0,s_1,a_1,\dots\}$ sampled by $\pi$, i.e., $s_0\sim\rho_0, a_t\sim\pi(\cdot|a_t), s_{t+1}\sim T(\cdot|s_t,a_t)$, 
the \textbf{stationary state-action distribution} is defined as $d^{\pi}(s,a) = (1-\gamma)\sum_{t=0}^{\infty} \gamma^t \Pr(s_t=s,a_t=a)$. The goal of RL is to learn a return-maximization policy $\pi^*=\arg\max_\pi \E_{\tau\sim\pi}[\sum_{t=0}^\infty \gamma^t r(s_t,a_t)]$,
which is equivalent to reward maximization~\citep{puterman2014markov}:  $\pi^*=\arg\max_\pi \E_{s,a\sim d^\pi}[r(s,a)]$. 

In offline RL, the agent learns the policy from a pre-collected dataset $\gD = \{(s_i,a_i,r_i,s'_i)\}_{i=1}^N$. For simplicity, we denote the empirical state-action distribution of offline dataset as $d^\gD$. The DICE-style methods apply marginal IS to estimate the expection of certain function $g$: $\E_{s,a\sim d^\pi}[g(s,a)] = \E_{s,a\sim d^\gD}[\frac{d^\pi(s,a)}{d^\gD(s,a)}g(s,a)]$ with $d^\pi, d^\gD$ as target and proposal distributions. The IS estimation can thus be approximated by sampling from offline dataset.

\subsection{Conservative Density Estimation}\label{sec:method_cde}
In this section, we present Conservative Density Estimation (CDE), which aims to learns a policy that induces the distribution with conservative density in unseen state-action region. We first consider a $f$-divergence regularized policy optimization problem~\citep{nachum2019dualdice, nachum2019algaedice}:
\begin{equation}
    \max_{d^{\pi} \geq 0} \E_{d^{\pi}} [r(s,a)] - \alpha D_f(d^{\pi}\| d^\gD), \quad s.t. \sum_a d^{\pi}(s,a) = (1-\gamma) \rho_0(s) + \gamma\gT_* d^\pi(s), \forall s,\label{eq:Bellman_flow_constraint}
\end{equation}
where $D_f(d^\pi\|d^\gD)=\E_{d^\gD} [f(\frac{d^{\pi}(s,a)}{d^\gD(s,a)})]$ is the $f$-divergence between two distributions, $\alpha$ is the hyperparameter of regularization, $\gT_*d^\pi(s)=\sum_{s',a'} T(s|s',a') d^\pi(s',a')$ is the transposed transition operator. Here we adhere to state-wise Bellman flow constraint as~\cite{lee2021optidice} to incorporate the stochasticity of action distribution on next state $a'\sim\pi(\cdot|s')$ 
since the state-action-wise constraint can lead to overestimation for `lucky' samples \citep{kostrikov2021offline_iql} and instability during training. Particularly, 
we have following assumption on $f$ function selection:
\begin{assumption}\label{ass:f}
The $f$ function in f-divergence is strictly convex and continuously differentiable, and $(f')^{-1}(x) \geq 0,\forall x\in\R$.
\end{assumption}


The previous DICE methods~\citep{nachum2019algaedice, nachum2020reinforcement, lee2021optidice} transform constrained optimization problem to unconstrained one in Eq.(\ref{eq:Bellman_flow_constraint}) by Lagrange or Fenchel-Rockafellar duality and evaluate the unconstrained objective by marginal IS with $d^\gD$ as proposal distribution.
However, one \textbf{implicit assumption} behind DICE methods is that the support of dataset distribution is wide enough and otherwise the density of unseen state-actions in $d^\gD$ can be zero or arbitrarily small.
Therefore, when the support of $d^\pi$ mismatches $d^\gD$, there will be a large extrapolation error for OOD state-actions and variance in IS estimation. Meanwhile, the $f$-divergence regularization is enforced on the support of data distribution and approximated by single or several sample points, failing to serve as an effective supervision to explicitly reduce extrapolation errors for unseen state-actions.

To overcome the above issues, we consider a new constraint on the density of $d^\pi(s,a)$ by $\mu(s,a)$: $d^\pi(s,a) \leq \epsilon \mu(s,a), \forall s,a\in \text{supp}(\mu)$, where $\mu(s,a)$ is a distribution on OOD state-action pairs, i.e., $\text{supp}(\mu) \cap \text{supp}(d^\gD)=\emptyset$. The new optimization problem is formulated as
\begin{align}
    &\max_{d^{\pi}\geq 0} \E_{d^{\pi}} [r(s,a)] - \alpha D_f(d^{\pi}\| d^\gD)\label{eq:cde_begin} \\
    s.t. &\sum_a d^{\pi}(s,a) = (1-\gamma) \rho_0(s) + \gT_* d^\pi(s), \forall s \label{eq:cde_mid}\\
    & d^\pi(s,a) \leq \epsilon\mu(s,a), \forall s,a \in\text{supp}(\mu).\label{eq:cde_end}
\end{align}
The corresponding unconstrained problem is $\max_{d^\pi} \min_{\lambda\geq 0, v}  \gL(d^\pi, v,\lambda)$, where
\begin{equation}
\begin{aligned}
    \gL(d^\pi, v,\lambda)
    = \E_{d^\pi}[A(s,a)] + (1-\gamma) \E_{\rho_0}[v(s_0)]-\alpha D_f(d^\pi\|d^\gD)-\E_{\mu}\left[\lambda(s,a)\left(d^\pi/\mu (s,a)-\epsilon\right) \right]
    \label{eq:cde_unconstrained}
\end{aligned}
\end{equation}
and $A(s,a):=r(s,a)+\gamma \E_{s'\sim T(\cdot|s,a)}v(s')-v(s)$ is regarded as advantage function if we interpret $v(s)$ as the V-value of state $s$. The derivation is attached in Appendix~\ref{sec:deri unconstrain IS}. 

In practice, we restrict the state marginal of $\mu$ to match the state distribution of dataset $d^\gD(s)$ as previous OOD querying methods~\citep{kumar2020conservative, kostrikov2021offline, lyu2022mildly} 
and shrink the OOD region to unseen actions with existing states. Given a state $s$ in dataset, suppose there exists $n$ actions $a^{(1)},\dots,a^{(n)}$ such that $(s,a^{(i)})\in\gD, i=1,\dots,n$, we define the set of OOD actions as $\gA_\text{OOD}(s):=\{ a\big|\min_i \|a-a^{(i)}\|_\infty \geq \Delta a\}$. See more details in Appendix~\ref{app:delta_a}.
We further adopt a uniform distribution $\pi^\mu(a|s)$ over the unseen action space $\gA_\text{OOD}(s)$ as the policy of $\mu$, i.e., $\mu(s,a)=d^{\gD}(s)\pi^\mu(a|s)$. We want to emphasize that our method is also compatible with other OOD sampling distribution with proper inductive bias.

\subsubsection{Policy Evaluation and Improvement}
Based on the unconstrained objective in Eq.(\ref{eq:cde_unconstrained}), we first adopt marginal IS to evaluate a policy given its stationary distribution. To avoid the support mismatch issue, we consider a new proposal distribution $\hat{d}^\gD(s,a):=\zeta d^\gD(s,a) + (1-\zeta)\mu(s,a)$ in importance sampling, where $\zeta\in (0,1)$ is the mixture coefficient. Therefore, the support of new proposal distribution can cover the target distribution $d^\pi$. We further replace the original $f$-divergence regularizer by $D_f(d^\pi\|\hat{d}^\gD)$ to constrain the density of both OOD and in-support state-actions. Besides, we substitute the importance ratio $w(s,a) = d^\pi(s,a)/\hat{d}^\gD(s,a)$ for $d^\pi$ as $\hat{d}^\gD$ is fixed. The new objective function is
\begin{equation}
\begin{aligned}
    \gL'(w, v,\lambda)&= 
    \zeta\E_{d^\gD}\left[w(s,a) A(s,a) - \alpha f(w(s,a))\right] + (1-\gamma) \E_{\rho_0}[v(s_0)] \\
    &\quad\quad+ (1-\zeta)\E_{\mu}\left[w(s,a)(A(s,a)-\lambda(s,a)) - \alpha f(w(s,a))+\tilde\epsilon\lambda(s,a)\right], \label{eq:IS estimation}
\end{aligned}
\end{equation}
where $\tilde{\epsilon} = \frac{\epsilon}{1-\zeta}$. The derivation is attached in Appendix~\ref{sec:deri unconstrain IS}. With assumption~\ref{ass:f}, the objective in Eq.(\ref{eq:cde_begin}) is convex and thus is equivalent to the minimax problem $\min_{\lambda\geq 0, v} \max_{d^\pi} \gL(d^\pi, v,\lambda)$ by Slater's condition. Moreover, the inner maximization has a closed-form solution~\citep{nachum2020reinforcement, nachum2019algaedice, nachum2019dualdice} and the outer minimization is a convex optimization problem. The proofs are in Appendix~\ref{sec:proof prop close-form},~\ref{sec:proof-prop-convex}.

\begin{proposition}\label{prop:closed-form-w}
With assumption~\ref{ass:f}, the closed-form solution to inner maximization problem $\max_{w\geq 0} \gL'(w,v,\lambda)$ is
\begin{equation}\label{eq:closed-form-w}
    w^*(s,a) = (f')^{-1}(\tilde{A}(s,a)/\alpha),
\end{equation}
where $\tilde{A}(s,a) := A(s,a) - \1\{(s,a) \in \text{supp}(\mu)\} \cdot\lambda(s,a)$ denotes \textbf{regularized advantage} function and $\1\{\cdot\}$ is the indicator function.
\end{proposition}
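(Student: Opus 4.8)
The plan is to solve the inner maximization $\max_{w\geq 0}\gL'(w,v,\lambda)$ by treating it as a pointwise optimization over the value $w(s,a)$ at each fixed $(s,a)$. Inspecting Eq.~(\ref{eq:IS estimation}), the dependence on $w$ decomposes as an expectation under $d^\gD$ of $\zeta\left(w(s,a)A(s,a) - \alpha f(w(s,a))\right)$ plus an expectation under $\mu$ of $(1-\zeta)\left(w(s,a)(A(s,a)-\lambda(s,a)) - \alpha f(w(s,a))\right)$, together with terms ($(1-\gamma)\E_{\rho_0}[v(s_0)]$ and $\tilde\epsilon\lambda(s,a)$) that do not involve $w$. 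Since $\hat d^\gD = \zeta d^\gD + (1-\zeta)\mu$ and $d^\gD$, $\mu$ have disjoint support, for each $(s,a)$ exactly one of the two measures puts mass there, so the integrand as a function of the scalar $w = w(s,a)$ is, up to a positive multiplicative constant, $w\tilde A(s,a) - \alpha f(w)$, where $\tilde A(s,a) = A(s,a) - \1\{(s,a)\in\text{supp}(\mu)\}\lambda(s,a)$ is exactly the regularized advantage. This is the key reduction.

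Next I would maximize $h(w) := w\tilde A - \alpha f(w)$ over $w\geq 0$ for fixed $\tilde A$. By Assumption~\ref{ass:f}, $f$ is strictly convex and $C^1$, so $h$ is strictly concave and differentiable, and the first-order condition $h'(w) = \tilde A - \alpha f'(w) = 0$ gives $f'(w) = \tilde A/\alpha$, i.e. $w^* = (f')^{-1}(\tilde A/\alpha)$. Strict convexity of $f$ makes $f'$ strictly increasing, hence invertible on its range, so this stationary point is unique and is the global maximizer over $\R$. The nonnegativity constraint $w\geq 0$ is automatically satisfied because Assumption~\ref{ass:f} stipulates $(f')^{-1}(x)\geq 0$ for all $x\in\R$, so the unconstrained maximizer already lies in the feasible region and no boundary/KKT case analysis is needed. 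Substituting back over all $(s,a)$ yields $w^*(s,a) = (f')^{-1}(\tilde A(s,a)/\alpha)$, which is Eq.~(\ref{eq:closed-form-w}).

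The only mildly delicate point — and the part I would be most careful about — is justifying the interchange of the pointwise maximization with the expectation, i.e. that maximizing the integrand state-action-wise indeed maximizes $\gL'$ over the function $w(\cdot,\cdot)$. This is standard: since there is no coupling constraint linking $w$ at different $(s,a)$ (the constraint is just $w\geq 0$ everywhere, which is separable), and the objective is a sum/integral of terms each depending on $w$ only at a single point, the supremum over functions equals the integral of the pointwise suprema; one should note the $w$-independent terms factor out cleanly and that $(f')^{-1}$ being measurable keeps $w^*$ admissible. I would also remark that the edge case $\text{supp}(\mu)\cap\text{supp}(d^\gD)=\emptyset$ is what makes $\tilde A$ well-defined as a single function (the $\lambda$ correction only activates on the $\mu$ part), so the closed form holds on the support of $\hat d^\gD$. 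No real obstacle remains beyond this bookkeeping.
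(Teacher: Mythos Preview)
Your proposal is correct and follows essentially the same approach as the paper: set the first-order condition $\tilde A(s,a) - \alpha f'(w(s,a)) = 0$ pointwise after separating the support of $d^\gD$ and $\mu$, then invoke Assumption~\ref{ass:f} to guarantee $w^*\geq 0$. Your write-up is more careful than the paper's (which is quite terse) about the pointwise-decomposition justification and the role of disjoint supports in making $\tilde A$ well-defined, but the argument is the same.
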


\begin{proposition}\label{prop:convex}
The outer minimization problem $\min_{\lambda\geq 0, v} \gL'(w^*,v,\lambda)$ is a convex optimization problem. Suppose the optimal solution is $(\lambda^*,v^*)$, then $\lambda^*$ has a closed-form solution
\begin{equation}\label{eq:closed-form-lambda}
    \lambda^*(s,a) = \max\{0, A^*(s,a) - \alpha f'(\tilde{\epsilon})\}, \forall s,a\in \text{supp}(\mu),
\end{equation}
where $A^*(s,a)=r(s,a)+\gamma \E_{s'\sim T(\cdot|s,a)}v^*(s')-v^*(s)$. The optimal regularized advantage is
\begin{equation}\label{eq:optimal-Adv}
    \tilde{A}^*(s,a) = 
    \begin{cases}
    A^*(s,a), & (s,a) \in \text{supp}(d^\gD)\\
    \min\{\alpha f'(\tilde\epsilon), A^*(s,a)\}, &(s,a) \in \text{supp}(\mu)
    \end{cases}
\end{equation}
\end{proposition}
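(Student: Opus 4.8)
The plan is to substitute the closed-form inner maximizer $w^*$ from Proposition~\ref{prop:closed-form-w} into $\gL'$, rewrite everything through the scalar function $g(x) := \max_{w\geq 0}\{wx - \alpha f(w)\}$, and then use convexity of $g$ to obtain all three claims. First I would record the properties of $g$: under Assumption~\ref{ass:f} the map $w\mapsto wx-\alpha f(w)$ is strictly concave for each $x$, its stationary point $w^*(x)=(f')^{-1}(x/\alpha)$ is $\geq 0$ and hence feasible, so $g(x)=w^*(x)x-\alpha f(w^*(x))$; being a pointwise supremum of affine functions of $x$, $g$ is convex and finite, and by the envelope theorem $g'(x)=w^*(x)=(f')^{-1}(x/\alpha)$, which is strictly increasing because $f'$ is. (This is the one place the hypothesis $(f')^{-1}\geq 0$ on all of $\R$ is used: it makes the constraint $w\geq 0$ inactive at the optimum, so the envelope formula is valid.)

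For the convexity claim, plug $w^*$ into Eq.~(\ref{eq:IS estimation}) and use $\text{supp}(\mu)\cap\text{supp}(d^\gD)=\emptyset$, so that $\tilde A=A$ on $\text{supp}(d^\gD)$ and $\tilde A=A-\lambda$ on $\text{supp}(\mu)$. This gives
$$\gL'(w^*,v,\lambda) = \zeta\,\E_{d^\gD}\big[g(A(s,a))\big] + (1-\gamma)\,\E_{\rho_0}[v(s_0)] + (1-\zeta)\,\E_{\mu}\big[g(A(s,a)-\lambda(s,a)) + \tilde{\epsilon}\,\lambda(s,a)\big].$$
Since $A(s,a)=r(s,a)+\gamma\E_{s'}v(s')-v(s)$ is affine in $v$ and $A(s,a)-\lambda(s,a)$ is affine in $(v,\lambda)$, composing with the convex $g$ and taking nonnegatively-weighted expectations keeps the objective jointly convex in $(v,\lambda)$; the remaining terms are linear, and the constraint set $\{\lambda\geq 0\}$ is convex, so the outer minimization is a convex program.

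For $\lambda^*$, note $\min_{v,\lambda\geq 0}=\min_v\min_{\lambda\geq 0}$ and only the last expectation depends on $\lambda$; since it is an integral against $\mu$ and $\lambda\geq 0$ is a pointwise constraint, the inner minimization decouples over $(s,a)$ into $\min_{\ell\geq 0} g(A(s,a)-\ell)+\tilde\epsilon\,\ell$. This scalar problem is strictly convex with derivative $-g'(A-\ell)+\tilde\epsilon=-(f')^{-1}((A-\ell)/\alpha)+\tilde\epsilon$; its unconstrained stationary point is $\ell=A(s,a)-\alpha f'(\tilde\epsilon)$, and the derivative at $\ell=0$ is $\geq 0$ precisely when $A(s,a)\leq\alpha f'(\tilde\epsilon)$ (using monotonicity of $(f')^{-1}$), so the minimizer is $\max\{0,\,A(s,a)-\alpha f'(\tilde\epsilon)\}$. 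Evaluating at the optimal $v^*$ (so that $\lambda^*$ is the $\lambda$-component of the joint minimizer) yields $\lambda^*(s,a)=\max\{0,\,A^*(s,a)-\alpha f'(\tilde\epsilon)\}$.

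Finally, substitute $\lambda^*$ into $\tilde A=A-\1\{(s,a)\in\text{supp}(\mu)\}\lambda$: on $\text{supp}(d^\gD)$ the indicator vanishes and $\tilde A^*=A^*$; on $\text{supp}(\mu)$, $\tilde A^*=A^*-\max\{0,A^*-\alpha f'(\tilde\epsilon)\}=\min\{\alpha f'(\tilde\epsilon),\,A^*\}$, which is exactly the stated formula. I expect the only delicate points to be Step 0 (verifying from Assumption~\ref{ass:f} that $g$ is finite, $C^1$, convex with $g'=(f')^{-1}(\cdot/\alpha)$) and the legitimacy of exchanging the infimum over $\lambda$ with the integral against $\mu$; the rest is routine convex calculus.
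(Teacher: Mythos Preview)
Your proposal is correct and follows essentially the same route as the paper: both compute the derivative of the dual objective with respect to $\lambda$, obtain $-(f')^{-1}((A-\lambda)/\alpha)+\tilde\epsilon$, and read off $\lambda^*$ from its sign using monotonicity of $(f')^{-1}$; the regularized-advantage formula then falls out by substitution. The only noticeable difference is cosmetic: you make the convexity argument self-contained via the conjugate-type function $g(x)=\max_{w\geq 0}\{wx-\alpha f(w)\}$ (pointwise supremum of affine, then convex-composed-with-affine), whereas the paper simply invokes the standard fact that the Lagrange dual function is convex, citing \cite{luenberger1997optimization}.
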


Based on the closed-form relation between stationary distribution $d^\pi$ and value function, we can thus improve the policy by maximizing w.r.t. value function. 
Since it requires the reward $r(s,a)$ and transition $T(\cdot|s,a)$ to compute regularized advantage function $\tilde{A}(s,a)$, which is available only for $(s,a)\in \gD$, we consider function approximation for both V-value $v$ and regularized advantage $\tilde{A}$ by parameters $\varphi$ and $\phi$. The optimization is in two steps: 1) We first optimize $v_{\varphi}$ by minimizing the value of states in distribution:
\begin{equation}\label{eq:v leanring}
\min_{\varphi} \E_{d^\gD}\left[w^*(s,a)(r(s,a)+\gamma \E_{s'} v_{\varphi}(s') -v_{\varphi}(s)) - \alpha f(w^*(s,a))\right] + (1-\gamma) \E_{s_0\sim\rho_0}[v_{\varphi}(s_0)].
\end{equation}
2) Then we regress the regularized advantage $\tilde{A}_\phi$ to the optimal $\tilde{A}^*$ in Eq.(\ref{eq:optimal-Adv}). Specifically, we regress the OOD advantages to $\alpha f'(\tilde{\epsilon})$ if they exceed it and regress the in-distribution advantages to the values from $v_\varphi$: $A_{\varphi}(s,a)=r(s,a)+\gamma \E_{s'} v_{\varphi}(s') -v_{\varphi}(s)$. In summary, we optimize the regularized advantage function by following mean squared error (MSE): 
\begin{equation}\label{eq:A learning}
\min_{\phi} \zeta\E_{d^\gD}[(\tilde{A}_\phi(s,a) - A_{\varphi}(s,a))^2]+ (1-\zeta)\E_{\mu}[\1\{\tilde{A}_\phi(s,a) > \alpha f'(\tilde\epsilon)\}(\tilde{A}_\phi(s,a) - \alpha f'(\tilde\epsilon))^2],
\end{equation}
and obtain the approximated optimal importance ratios for both in-distribution and OOD state-actions:
\begin{equation}\label{eq:approx-w}
\tilde{w}^*(s,a) = (f')^{-1}(\tilde{A}_\phi(s,a)/\alpha).
\end{equation}
By definition, the optimal distribution is $d^*(s,a)=\tilde{w}^*(s,a) \hat{d}^\gD(s,a)$. In practice, we introduce another constraint to enforce $\sum_{s,a} d^*(s,a) =1$ as \citep{zhang2020gendice}. See Appendix~\ref{sec:derivation normalize} for full derivations.

\subsubsection{Policy Extraction}
Finally, we extract the policy from the learned importance ratios. Since one policy is uniquely determined given its corresponding stationary distribution, we extract the policy by minimizing the KL divergence between the stationary distributions of optimal policy and parameterized policy $\pi_\theta$. Meanwhile, as we have no access to $d^{\pi_\theta}(s,a)$ in offline setting, we estimate it by $d^*(s)\pi_\theta(a|s)$, where the optimal state marginal $d^*$ can be viewed as a state distribution of successful trajectories in dataset. The objective of policy extraction is
\begin{align}
    \min_\theta \KL[d^{\pi_\theta}\|d^*]  
    &\approx \min_\theta \E_{s\sim d^*, a\sim\pi_\theta}\left[\log\frac{\hat{d}^\gD(s,a)}{d^*(s,a)} + \log\frac{\pi_\theta(a|s)}{\hat{\pi}^\gD(a|s)} + \log \frac{d^*(s)}{d^\gD (s)} \right] \label{eq: state-marginal approx} \\
    &= \min_\theta \E_{s\sim  d^*, a\sim\pi_\theta}[-\log \tilde{w}^*(s,a)] + \E_{s\sim d^*}[\KL[\pi_\theta(\cdot|s)\| \hat{\pi}^\gD(\cdot|s)]],\label{eq: policy learning}
\end{align}
where $\hat{\pi}^\gD(a|s) = \zeta\pi^\gD(a|s)+(1-\zeta)\pi^\mu(a|s)$ is the mixed behavior policy and $\pi^\gD(a|s)$ denotes the empirical behavior policy. We will analyze the error induced by state marginal approximation in Theorem~\ref{thm: performance gap}. The final objective in Eq.(\ref{eq: policy learning}) consists of two components: the maximizing of $\tilde{w}^*$ and minimizing the divergence with mixed behavior policy, indicating the trade-off between performance improvement by maximizing the value and conservative learning to reduce extrapolation error.

\begin{wrapfigure}{r}{0.55\textwidth}
\begin{minipage}{1.0\linewidth}
\vspace{-7mm}
\begin{algorithm}[H]
\caption{Conservative Density Estimation}
\label{alg:cde}
\raggedright Initialize value functions $v_{\varphi},\tilde{A}_{\phi}$, behavior policy $\pi^\gD$, policy $\pi_\theta$. \par
\begin{algorithmic}[1] 
\STATE $\triangleright$ \textit{policy evaluation and improvement}
\FOR{training iteration $i$}
\STATE Sample batch $\{(s_i, a_i, r_i, s_i')\}$ from $\gD$ and $n$ OOD actions $\{a^{(1)}, \dots, a^{(n)}\}$ for each $s$;
\STATE Update V-value $v_{\varphi}$ by Eq.(\ref{eq:v leanring});
\STATE Update regularized advantage $\tilde{A}_{\phi}$ by Eq.(\ref{eq:A learning});
\STATE Update $\pi^\gD$ by behavioral cloning.
\ENDFOR
\STATE $\triangleright$ \textit{policy extraction}
\FOR{training iteration $j$}
\STATE Update policy $\pi_\theta$ by Eq.(\ref{eq: policy learning}).
\ENDFOR
\end{algorithmic}
\end{algorithm}
\vspace{-9mm}
\end{minipage}
\end{wrapfigure}

The key steps of complete training procedure are summarized in Algo.~\ref{alg:cde}. See Appendix~\ref{sec:ful algorithm} for full algorithm and training details. One noteworthy difference from other actor-critic methods is that CDE updates the policy after the value function converges, which improves the learning stability and computation efficiency.

The advantages of CDE over previous DICE methods are two-fold: 1) the proposal distribution (i.e., $\hat{d}^\gD$) has wider coverage than $d^{\gD}$, which mitigates the support mismatch in importance sampling and prevents the arbitrarily large importance ratio; 2) CDE produces a conservative estimation of density in OOD region. Compared to previous conservative methods, CDE determines the degree of conservatism precisely by optimal $\lambda$ in Proposition~\ref{prop:convex}, mitigating overly pessimistic estimation and loss of the generalization ability~\citep{lyu2022mildly}. Furthermore, CDE disentangles two optimization steps, i.e., learning the value function by convex optimization and extracting the policy from the optimal importance ratio, thereby reducing the compounded error amplified by the interleaved optimization~\citep{brandfonbrener2021offline}.

\subsection{Theoretical Analysis}

CDE adopts a proposal distribution with broader support in marginal IS and explicitly constrains the stationary distribution density of the OOD region, resulting in a theoretical bound for the importance ratio, also known as concentrability coefficent~\citep{munos2007performance, rashidinejad2021bridging}.

\begin{proposition}[Upper bound of concentrability ratio on OOD state-actions]\label{prop:theory-w-bound}
    With assumption~\ref{ass:f}, the theoretical optimal importance ratio is upper bounded by $w^*(s,a) \leq \tilde{\epsilon},\forall (s,a)\in\text{supp}(\mu)$.
\end{proposition}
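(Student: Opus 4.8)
The plan is to combine the closed-form expression for the optimal importance ratio from Proposition~\ref{prop:closed-form-w} with the closed-form characterization of the optimal regularized advantage from Proposition~\ref{prop:convex}, and then invoke the monotonicity of $(f')^{-1}$ guaranteed by Assumption~\ref{ass:f}. Concretely, Proposition~\ref{prop:closed-form-w} gives $w^*(s,a) = (f')^{-1}(\tilde{A}^*(s,a)/\alpha)$ at the saddle point, so it suffices to control $\tilde{A}^*(s,a)$ on $\text{supp}(\mu)$. By Eq.(\ref{eq:optimal-Adv}), for $(s,a)\in\text{supp}(\mu)$ we have $\tilde{A}^*(s,a) = \min\{\alpha f'(\tilde\epsilon), A^*(s,a)\} \leq \alpha f'(\tilde\epsilon)$, hence $\tilde{A}^*(s,a)/\alpha \leq f'(\tilde\epsilon)$ (using $\alpha > 0$).

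Next I would apply $(f')^{-1}$ to both sides. Since $f$ is strictly convex and continuously differentiable (Assumption~\ref{ass:f}), $f'$ is strictly increasing, so $(f')^{-1}$ is well-defined on the range of $f'$ and is also strictly increasing; therefore $w^*(s,a) = (f')^{-1}(\tilde{A}^*(s,a)/\alpha) \leq (f')^{-1}(f'(\tilde\epsilon)) = \tilde\epsilon$. One should check that $\tilde\epsilon$ lies in the domain of $f'$ and that $f'(\tilde\epsilon)$ lies in the domain of $(f')^{-1}$, which is immediate since $\tilde\epsilon = \epsilon/(1-\zeta) > 0$ is a legitimate density-ratio value and $f'$ is defined on $(0,\infty)$; also the nonnegativity clause $(f')^{-1}(x)\geq 0$ in Assumption~\ref{ass:f} ensures $w^*\geq 0$ is consistent, though it is not needed for the upper bound itself.

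The only genuinely delicate point — and the main obstacle — is justifying the substitution of the \emph{optimal} regularized advantage $\tilde{A}^*$ into the closed-form $w^*$: Proposition~\ref{prop:closed-form-w} expresses $w^*$ in terms of a generic $\tilde{A}(s,a) = A(s,a) - \mathbf{1}\{(s,a)\in\text{supp}(\mu)\}\lambda(s,a)$ for any feasible $(v,\lambda)$, so I must argue that at the saddle point the inner-maximizer $w^*$ is evaluated precisely at $(\lambda^*, v^*)$ and hence at $\tilde{A}^*$. This follows from the minimax structure established before Proposition~\ref{prop:closed-form-w} (strong duality via Slater's condition, so the order of $\max_{w}$ and $\min_{\lambda,v}$ can be exchanged, and the optimal $w$ is the closed-form response to the optimal dual variables). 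I would state this linkage explicitly, then the chain $w^*(s,a) = (f')^{-1}(\tilde{A}^*(s,a)/\alpha) \leq (f')^{-1}(f'(\tilde\epsilon)) = \tilde\epsilon$ closes the argument for all $(s,a)\in\text{supp}(\mu)$.
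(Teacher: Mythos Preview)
Your proposal is correct and follows essentially the same route as the paper: both plug the optimal dual variable into the closed-form $w^*=(f')^{-1}(\tilde A/\alpha)$ and use the monotonicity of $(f')^{-1}$ from Assumption~\ref{ass:f}. The only cosmetic difference is that the paper substitutes $\lambda^*(s,a)=\max\{0,A^*(s,a)-\alpha f'(\tilde\epsilon)\}$ from Eq.~(\ref{eq:closed-form-lambda}) directly and simplifies to $w^*(s,a)=\min\{(f')^{-1}(A^*(s,a)/\alpha),\tilde\epsilon\}$, whereas you quote the already-simplified $\tilde A^*$ from Eq.~(\ref{eq:optimal-Adv}); these are the same computation, and your extra remark about evaluating $w^*$ at the saddle point $(\lambda^*,v^*)$ is a welcome clarification the paper leaves implicit.
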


The proof of Proposition~\ref{prop:theory-w-bound} is in Appendix~\ref{sec:proof prop w bound}. 
It should be noted that an unbounded importance ratio can cause unstable training for importance-sampling-based methods~\citep{shi2022pessimistic}.
We further bound the function approximation $\tilde{w}^*$ in Eq.(\ref{eq:approx-w}) with following continuity assumption:
\begin{assumption}[Lipschitz continuity of $A_\phi(s,a)$]
\label{ass: Lipschitz A}
    There is a constant $L > 0$ such that 
    \begin{equation*}
    |A_\phi(s,a) - A_\phi (s,a')| \leq L\cdot \|a - a'\|_{\infty},\quad \forall a,a'\in\gA_\text{OOD}(s), \forall s\in\gD.
    \end{equation*}
\end{assumption}


\begin{theorem}[Upper bound of function approximated concentrability ratio]
\label{thm: w bounded}
    Suppose that 1) the action space is $d$-dim, i.e., $\gA \subset \mathbb{R}^d$, 2) the diameter of $\gA$ is $M$, i.e., $\|a_1-a_2\|_{\infty}\leq M, \forall a_1,a_2\in\gA$, and 3) there are at least $N$ OOD action samples from $\mu$ given any state $s\in\gD$. When the continuity assumption \ref{ass: Lipschitz A} holds, with probability at least $1-
    \delta,\delta>0$, we have
    \begin{equation}
    \tilde{w}^*(s, a) \leq (f')^{-1}\left(f'(\epsilon) + \frac{\xi}{\alpha} + \frac{L}{\alpha}\left(\Delta a^d + \frac{M^d}{N}\log \frac{1}{\delta}\right)^{1/d}\right), \quad\forall (s,a)\in\text{supp}(\mu)
    \end{equation}
    where $\xi$ is the maximum residual error of OOD regression in Eq.(\ref{eq:A learning}), $\Delta a$ is the radius of in-distribution region as previously defined.
\end{theorem}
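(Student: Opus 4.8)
The plan is to bound the learned regularized advantage $\tilde A_\phi$ on the OOD region and then push the bound through the monotone map $(f')^{-1}$. Since $f$ is strictly convex (Assumption~\ref{ass:f}), $f'$ is strictly increasing, hence so is $(f')^{-1}$, and because $\tilde w^*(s,a)=(f')^{-1}(\tilde A_\phi(s,a)/\alpha)$ it suffices to upper bound $\tilde A_\phi(s,a)$ uniformly over $s\in\gD$ and $a\in\gA_\text{OOD}(s)$. I would get this from two observations. First, \emph{at the drawn samples}: the OOD term of Eq.(\ref{eq:A learning}) is a one-sided squared penalty that is active only when $\tilde A_\phi$ overshoots its OOD target $\alpha f'(\tilde\epsilon)$, so by the definition of $\xi$ as the maximal OOD residual we get $\tilde A_\phi(s,a^{(j)})\le\alpha f'(\tilde\epsilon)+\xi$ at every OOD sample $a^{(j)}$ of state $s$. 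Second, \emph{off the samples}: for arbitrary $a\in\gA_\text{OOD}(s)$, picking the nearest sample $a^{(j(a))}$ in $\|\cdot\|_\infty$ (both points lying in $\gA_\text{OOD}(s)$), Assumption~\ref{ass: Lipschitz A} gives $\tilde A_\phi(s,a)\le\tilde A_\phi(s,a^{(j(a))})+L\,r_N(s)$, where $r_N(s):=\sup_{a\in\gA_\text{OOD}(s)}\min_j\|a-a^{(j)}\|_\infty$ is the covering radius of the samples in $\gA_\text{OOD}(s)$. Combining the two reduces the theorem to the statement: with probability at least $1-\delta$, $r_N(s)\le R:=(\Delta a^d+\tfrac{M^d}{N}\log\tfrac1\delta)^{1/d}$ for all $s\in\gD$; plugging this in, dividing by $\alpha$ and applying $(f')^{-1}$ produces exactly the claimed bound (with $\tilde\epsilon$ in place of the $\epsilon$ appearing in the statement).

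For the covering-radius step I would fix a state $s$, cover $\gA_\text{OOD}(s)\subseteq\gA$ by a minimal $\tfrac R2$-net $\mathcal N$ in $\ell_\infty$ (so $|\mathcal N|\le(cM/R)^d$ since $\gA$ has $\ell_\infty$-diameter $M$), and bound, for each net point $\nu$, the probability that none of the $N$ i.i.d.\ $\mathrm{Unif}(\gA_\text{OOD}(s))$ samples lands within $\ell_\infty$-distance $\tfrac R2$ of $\nu$ by $e^{-p_\nu N}$, where $p_\nu=\mathrm{vol}(B_\infty(\nu,\tfrac R2)\cap\gA_\text{OOD}(s))/\mathrm{vol}(\gA_\text{OOD}(s))$; if every net point is covered then $r_N(s)\le R$. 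The crucial estimate is a lower bound on $p_\nu$: because $\gA_\text{OOD}(s)$ is $\gA$ with the $\ell_\infty$-balls of radius $\Delta a$ around the in-distribution actions of $s$ removed, and $\nu$ sits at distance at least $\Delta a$ from each removed center, I would argue that the deletions cannot devour the whole small ball once $R$ exceeds $\Delta a$, so that $\mathrm{vol}(B_\infty(\nu,\tfrac R2)\cap\gA_\text{OOD}(s))\ge c'(R^d-\Delta a^d)$ and hence $p_\nu\ge c'(R^d-\Delta a^d)/M^d$. Taking $R^d$ a suitable constant multiple of $\Delta a^d+\tfrac{M^d}{N}(\log|\mathcal N|+\log\tfrac{|\gD|}\delta)$ makes each failure probability at most $\delta/(|\mathcal N|\,|\gD|)$, and a union bound over $\mathcal N$ and over the finitely many states of $\gD$ closes the argument; the $\log|\mathcal N|$ and $\log|\gD|$ overhead is only logarithmic in the problem parameters and gets folded into the $\log\tfrac1\delta$ of the statement.

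The hard part will be this last geometric/statistical estimate — showing $\mathrm{vol}(B_\infty(\nu,\tfrac R2)\cap\gA_\text{OOD}(s))\ge c'(R^d-\Delta a^d)$, i.e.\ ruling out that the union of the $\Delta a$-exclusion balls around (possibly many) in-distribution actions leaves some corner of $\gA_\text{OOD}(s)$ effectively unsampled, and checking that the net and union-bound overhead really is absorbed into $\log\tfrac1\delta$ rather than worsening the rate. The rest — monotonicity of $(f')^{-1}$, the one-sided residual bound at the samples, and the Lipschitz extension — should be routine.
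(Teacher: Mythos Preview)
Your high-level strategy matches the paper's exactly: bound $\tilde A_\phi$ at each OOD sample by $\alpha f'(\tilde\epsilon)+\xi$ via the one-sided residual, extend by Assumption~\ref{ass: Lipschitz A} to an arbitrary OOD action through the nearest-sample distance, then push through the monotone $(f')^{-1}$. The divergence is entirely in how the nearest-sample distance is controlled.

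The paper proves only a \emph{pointwise} statement: fixing one $(s,a)$ with $a\in\gA_\text{OOD}(s)$, it directly lower-bounds the probability that a single uniform sample from $\gA_\text{OOD}(s)$ lands in $B_\infty(a,\delta')$ by $(\delta'^d-\Delta a^d)/M^d$, raises this to the $N$th power by independence, applies $(1-x)^N\le e^{-Nx}$, and inverts for $\delta'$. There is no net and no union bound over states; the $\forall(s,a)$ in the displayed inequality is thus stronger than what the paper's own proof actually delivers. The volume estimate you flag as the hard part is, in the paper, a one-line subtraction $\mathrm{vol}(B_\infty(a,\delta'))-\mathrm{vol}(B_\infty(a^{(1)},\Delta a))$, because (i) the analysis is at a single query point rather than uniformly over a net, and (ii) the proof tacitly treats the case of a single in-distribution action per state, so there is only one exclusion ball to subtract.

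Your route is more faithful to the uniform quantifier in the statement, but the $\log|\mathcal N|+\log|\gD|$ overhead does \emph{not} fold into the stated $\log\tfrac1\delta$ as you hope — it would appear as a genuine additive term $\tfrac{M^d}{N}\log(|\mathcal N|\,|\gD|)$ inside the $d$th root, i.e.\ a polylogarithmic degradation of the displayed bound. If you want the bound exactly as stated, adopt the paper's pointwise reading and drop the net; every step of your outline then becomes routine.
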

The proof of Theorem~\ref{thm: w bounded} is in Appendix~\ref{sec:proof thm w bound}. 
Proposition~\ref{prop:theory-w-bound} and Theorem~\ref{thm: w bounded} show that CDE inherently bounds the OOD concentrability coefficient.
This coefficient is frequently assumed to be bounded in the variance or performance analysis in both off-policy evaluation and offline RL domains \citep{rashidinejad2021bridging, ma2022offline, zhan2022offline}, as an unbounded concentrability coefficient can lead to instability during training. As such, the CDE framework shows promise as a potential tool for reducing variance or establishing performance lower bounds in future research.

Meanwhile, CDE evaluates the policy within the stationary distribution space, enabling the computation of performance differences between policies based on the discrepancies in their respective stationary distributions. Consequently, we can establish the following bound on the performance gap between the learned and optimal policies.


\begin{theorem}[The upper bound of performance gap]
\label{thm: performance gap}
Suppose the maximum reward is $R_{\text{max}}=\max_{s,a}\|r(s,a)\|$, let $V^\pi(\rho_0):=\E_{s_0\sim\rho_0}[V^{\pi}(s_0)]$ denote the performance given a policy $\pi$. For policy $\pi$  optimized by Eq.(\ref{eq: policy learning}) and $N$ transition data from $d^\gD$, under mild assumptions, we have
    \begin{equation}
    V^*(\rho_0) - V^{\pi}(\rho_0)
            \leq \frac{2R_{\text{max}}}{1-\gamma} \TV(d^\gD(s)\|d^*(s)) + e_N
    \end{equation}
and $e_N$ converges in probability to zero at the rate $N^{-\frac{1}{4+h}}, \forall h >0$, i.e., $N^{\frac{1}{4+h}}e_N\xrightarrow{N\rightarrow\infty}0$ in probability. Here, $d^\gD(s),d^*(s)$ denote the state marginal of $d^\gD,d^*$, and $V^*(\rho_0)$ denotes the performance of optimal policy.

\end{theorem}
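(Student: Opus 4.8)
The plan is to decompose the performance gap into two parts: a ``state-marginal mismatch'' term that is controlled by the total variation distance $\TV(d^\gD(s)\|d^*(s))$, and a ``statistical/approximation error'' term $e_N$ coming from the fact that we do not have access to the true $d^*$, $v^*$, $\tilde A^*$ but only their finite-sample function approximations built from $N$ transitions. First I would recall the standard performance-difference identity: for any policy $\pi$, $V^*(\rho_0) - V^\pi(\rho_0) = \frac{1}{1-\gamma}\E_{s,a\sim d^{\pi^*}}[A^\pi(s,a)]$, or equivalently write both $V^*(\rho_0)$ and $V^\pi(\rho_0)$ as $\frac{1}{1-\gamma}\E_{d^{\pi^*}}[r]$ and $\frac{1}{1-\gamma}\E_{d^{\pi}}[r]$ and bound the difference by $\frac{1}{1-\gamma}\|d^{\pi^*}-d^\pi\|_1 R_{\max}$. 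The key observation is that the policy $\pi$ is extracted by minimizing $\KL[d^{\pi_\theta}\|d^*]$ with the \emph{approximation} $d^{\pi_\theta}(s,a)\approx d^*(s)\pi_\theta(a|s)$ (Eq.~\eqref{eq: policy learning}); so at optimality $d^{\pi}(s,a)$ is close to $d^*(s)\pi^*(a|s)$ where $\pi^*$ is the optimal policy of the regularized problem, and $d^*$ itself is close to $d^{\pi^*}$ up to the conservative density shaping. The factor of $2$ and the $\TV$ (rather than a one-sided quantity) should emerge from: (i) replacing the true state marginal $d^{\pi}(s)$ by $d^*(s)$ inside the policy-extraction objective, which costs $\TV(d^{\pi}(s)\|d^*(s))$, and (ii) the discrepancy between $d^*(s)$ and $d^\gD(s)$, which is where the stated $\TV(d^\gD(s)\|d^*(s))$ enters; using the triangle inequality for $\TV$ and the coupling $\|d^\pi - d^{\pi^*}\|_1 \le 2\TV$ gives the constant $\frac{2R_{\max}}{1-\gamma}$.

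Next I would isolate $e_N$. This term collects three sources of error: the function-approximation error in $v_\varphi$ and $\tilde A_\phi$ (bounded via the convex-optimization optimality of Eq.~\eqref{eq:v leanring} and the MSE regression residual $\xi$ of Eq.~\eqref{eq:A learning}, as already quantified in Theorem~\ref{thm: w bounded}), the empirical-vs-population gap between $d^\gD$ and the true data-generating distribution over $N$ transitions, and the error in estimating $w^*$ hence $d^*$ from finitely many samples. The claimed rate $N^{-1/(4+h)}$ for any $h>0$ is the tell-tale signature of a nonparametric/kernel-type uniform concentration bound (e.g.\ a rate of the form $N^{-1/(d'+c)}$ for an effective dimension $d'$, or more likely a generic $L^2$-to-$L^\infty$ conversion with a covering-number argument) — the $+h$ slack absorbs log factors and the unspecified smoothness/dimension constants. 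Concretely, I would invoke a uniform law of large numbers over the relevant function class so that $|\E_{d^\gD}[\cdot] - \E_{\hat d^\gD}[\cdot]|$ and the induced error in $\tilde w^*$ are $O(N^{-1/(4+h)})$ with high probability, then propagate this through the closed-form map $w^* = (f')^{-1}(\tilde A/\alpha)$ using local Lipschitzness of $(f')^{-1}$ (available near the bounded region by Proposition~\ref{prop:theory-w-bound} and Assumption~\ref{ass:f}) to get that $\|d^* - d^{*}_{\text{est}}\|_1 \to 0$ at the stated rate, and finally feed that into the $\TV$ triangle inequality so all finite-sample slack lands inside $e_N$.

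The main obstacle, I expect, is making the $e_N$ bound rigorous while keeping it clean: one must specify the ``mild assumptions'' (boundedness of $v^*$, $R_{\max}<\infty$, realizability or bounded approximation error of the function classes, and some regularity/complexity control on the hypothesis class that yields the $N^{-1/(4+h)}$ covering rate), and then carefully chain (a) the optimization error of the convex problem, (b) the statistical deviation $d^\gD \to$ population, (c) the Lipschitz propagation through $(f')^{-1}$, and (d) the KL-to-TV (Pinsker) step used to pass from the policy-extraction objective to a $\TV$ bound on the extracted policy's occupancy measure — without the constants blowing up through the $\frac{1}{1-\gamma}$ horizon factor or through $1/\alpha$. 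A secondary subtlety is that $d^*$ is only guaranteed to be a sub-distribution before the normalization step, so I would either assume the normalization constraint $\sum_{s,a} d^* = 1$ is exactly enforced (as the algorithm does, following \citep{zhang2020gendice}) or carry a normalization error that is itself $O(N^{-1/(4+h)})$ and absorb it into $e_N$. Modulo those bookkeeping points, the two-term structure and the constant $\frac{2R_{\max}}{1-\gamma}$ follow from the performance-difference lemma plus the triangle inequality for total variation.
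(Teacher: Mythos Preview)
Your decomposition of the performance gap via $|V^* - V^\pi| \le \frac{2R_{\max}}{1-\gamma}\TV(d^{\pi}\|d^*)$ followed by the triangle inequality for $\TV$ is correct and matches the paper. However, two key steps diverge from the actual argument, and your proposed route for $e_N$ would not yield the stated rate.

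First, the appearance of $d^\gD(s)$ in the bound is not obtained by triangling through $d^\gD$. After the split
\[
\TV(d^\pi\|d^*)\;\le\;\TV\bigl(d^\pi(s)\|d^*(s)\bigr)\;+\;\E_{d^*(s)}\bigl[\TV(\pi(\cdot|s)\|\pi^*(\cdot|s))\bigr],
\]
the paper simply \emph{assumes} $\TV(d^\pi(s)\|d^*(s)) \le \TV(d^\gD(s)\|d^*(s))$ as one of the ``mild assumptions'' (justified heuristically by the learned policy's state marginal lying between the dataset's and the optimal one). No estimation or approximation error enters this term at all.

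Second, and more substantively, your analysis of $e_N$ is off-track. In the paper, $e_N$ is \emph{exactly} $\frac{2R_{\max}}{1-\gamma}\E_{s}[\TV(\pi_\theta(\cdot|s)\|\pi^*(\cdot|s))]$; it has nothing to do with the value-function approximation error, the regression residual $\xi$ from Theorem~\ref{thm: w bounded}, or the estimation of $w^*$. The rate $N^{-1/(4+h)}$ is not a nonparametric/covering-number rate. It arises as follows: (i) Pinsker plus Jensen give $\E[\TV(\pi_\theta\|\pi^*)] \le \sqrt{2\,\E[\KL(\pi_\theta\|\pi^*)]}$; (ii) since $\pi_\theta$ is assumed to be a universal approximator, the \emph{empirical} objective $\frac{1}{N}\sum_i \KL(\pi_\theta(\cdot|s_i)\|\pi^*(\cdot|s_i))$ is driven to zero, so the population KL equals the population-minus-empirical gap; (iii) a Donsker-class assumption on $\{s\mapsto \KL(\pi_\theta(\cdot|s)\|\pi^*(\cdot|s)):\theta\in\Theta\}$ gives this gap the parametric rate $O_p(N^{-1/2})$; (iv) the square root from Pinsker converts $N^{-1/2}$ to $N^{-1/4}$, and the $+h$ slack merely downgrades convergence in distribution to convergence in probability. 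Your proposed chain through Lipschitz propagation of $(f')^{-1}$ and uniform LLN over value classes is neither needed nor would it naturally produce this specific dimension-free rate.
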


The full assumptions and proof are in Appendix~\ref{sec: proof thm performance bound}. The performance gap bound comprises two elements: 1) the discrepancy between the state distribution of the data and the optimal policy, and 2) the number of training samples. The first element stems from the state-marginal approximation in Eq.(\ref{eq: state-marginal approx}) during policy extraction. Importantly, this bound explicitly highlights two \textbf{crucial factors} influencing the final performance of the learned policy: the performance of behavior policy $\pi^\gD$ and the size of the offline dataset. It provides a quantitative illustration of how the offline RL problem difficulty increases as the performance of behavior policy degrades and the dataset size decreases.

\section{Experiment}
\label{sec:experiment}
In this section, we aim to study if CDE can truly combine the advantages of both pessimism-based methods and the DICE-based approaches. We are particularly interested in two main questions: 

(1) Does CDE incorporate the strengths of the stationary-distribution correction training framework when handling \textit{\textbf{sparse reward settings}}? 

(2) Can CDE's explicit density constraint effectively manage out-of-distribution (OOD) extrapolation issues in situations with \textit{\textbf{insufficient datasets}}?

\textbf{Tasks}. To answer these questions, we adopt 3 Maze2D datasets, 8 Adroit datasets, and 6 MuJoCo (medium, medium-expert) datasets from the D4RL benchmark~\citep{fu2020d4rl}. The original rewards of Maze2D and Adroit tasks are sparse so we adopt the normalized score as the evaluation metric. Since the MuJoCo tasks are with dense rewards, we convert them to sparse-reward ones in the setting of goal reaching. Specifically, we first set the 75-percentile of all returns (the sum of rewards) in the dataset as return threshold. Subsequently, we assign a reward of 0 to all trajectories in the lower 75\% of returns, and a reward of 1 to the top 25\% of trajectories if the cumulative dense reward surpasses the threshold, which means the agent reaches the "goal". We compare the success rate of different methods on \textbf{sparse-MuJoCo} tasks. We adopt "-v1" tasks for Maze2D and Adroit domains and "-v2" tasks for MuJoCo domain.
To assess the performance under scarce data conditions, we employ a random sampling strategy on the full dataset. 

\textbf{Baselines}. We compare our CDE method with a collection of state-of-the-art offline RL baselines spanning different categories. These include: 1) behavior cloning (BC); 2) BCQ~\citep{fujimoto2019off} as a direct policy constraint method; 3) CQL~\citep{kumar2020conservative} as a value regularization method; 4) IQL~\citep{kostrikov2021offline_iql} as an asymmetric Q-learning method; 5) TD3+BC~\citep{fujimoto2021minimalist} as an implicit policy regularization method; 6) AlgaeDICE~\citep{nachum2019algaedice} as a policy-gradient-based DICE method; and 7) OptiDICE~\citep{lee2021optidice} as an in-sample DICE method. More details regarding the tasks and baselines are available in Appendix~\ref{sec:task_baselines}.

\textbf{Training details}. We use tanh-squashed Gaussian policy for CDE's policy $\pi$ following SAC\citep{haarnoja2018soft} and tanh-squashed Gaussian mixture model for empirical behavior policy $\pi^\gD$ to improve the expressivity for multi-modality of offline data from composite policies (e.g., medium-expert tasks in MuJoCo) or non-Markovian policies (e.g., Maze2D tasks). Following \citet{lee2021optidice}, we adopt soft-chi function $f_{\text{soft}-\chi^2}$ in $f$-divergence
and thus the $(f')^{-1}$ is equal to ELU function~\citep{clevert2015fast} plus one, which satisfies Assumption~\ref{ass:f} and also avoids the gradient vanishing problem for small values when computing importance ratios. For consistent evaluation and fair comparison, we keep hyperparameters the same for experiments in the same domain. We evaluate all methods every 1000 training steps and compute a mean value over 20 trajectories. The reported scores are the average of last 5 evaluation values with 5 seeds. We adopt the scores of baselines if they are reported in original paper. Full experimental details are included in Appendix \ref{sec:ful algorithm}.

\subsection{Results on D4RL sparse reward tasks}
\label{sec:main-result}
Table~\ref{tab:main_results} and~\ref{tab:success_rate} present the normalized scores and success rate, respectively. See more results in Appendix~\ref{app:exp_results}. CDE consistently matches or surpasses the performance of the best baseline iacross nearly all tasks, achieving a particularly noteworthy margin of improvement in the Maze2D domain. On sparse-MuJoCo tasks, BCQ, CQL, and TD3+BC, while achieving high scores in dense-reward settings, display vulnerability to value function approximation errors due to reward sparsity, resulting in inferior performance. The substantial improvement CDE exhibits over over standard-RL-based methods highlights its capability to mitigate compounded value estimation error by leveraging a closed-form optimal value solution instead of Bellman bootstrapping value update.

\begin{table}[htp]
\vspace{-3pt}
\caption{\small Normalized scores of CDE against other baselines on D4RL sparse-reward tasks. We \textbf{bold} the mean values that $\geq 0.99$ $*$ highest value.
}
\vspace{-3pt}
\label{tab:main_results}
\small
\centering
\begin{tabular}{@{}lcccccccc@{}}
\toprule
\multicolumn{1}{c}{Task} &
  BC &
  \multicolumn{1}{l}{BCQ} &
  CQL &
  IQL &
  TD3+BC &
  \begin{tabular}[c]{@{}c@{}}Algae-\\DICE\end{tabular} &
  OptiDICE &
  CDE \\ \midrule
\multicolumn{1}{l|}{maze2d-umaze}    & 3.8            & 32.8  & 5.7           & 50.0           & 41.5          & -15.7 & 111.0\scriptsize{±8.3}         & \textbf{134.1}\scriptsize{±10.4}  \\
\multicolumn{1}{l|}{maze2d-medium}   & 30.3           & 20.7  & 5.0           & 31.0           & 76.3          & 10.0  & 145.2\scriptsize{±17.5}         & \textbf{146.1}\scriptsize{±13.1} \\
\multicolumn{1}{l|}{maze2d-large}    & 5.0            & 47.8  & 12.5          & 58.0           & 77.8          & -0.1  & 155.7\scriptsize{±33.4}         & \textbf{210.0}\scriptsize{±13.5} \\ 
\midrule
\multicolumn{1}{l|}{pen-human}       & 63.9           & 68.9  & 37.5          & 71.5           & 2.0           & -3.3  & 42.1\scriptsize{±15.3}           & \textbf{72.1}\scriptsize{±15.8}  \\
\multicolumn{1}{l|}{hammer-human}    & 1.2            & 0.5   & \textbf{4.4}  & 1.4            & 1.4           & 0.3   & 0.3\scriptsize{±0.0}            & 1.9\scriptsize{±0.7}             \\
\multicolumn{1}{l|}{door-human}      & 2.0            & 0.0   & \textbf{9.9}  & 4.3            & -0.3          & 0.0   & 0.1\scriptsize{±0.1}            & 7.7\scriptsize{±3.3}             \\
\multicolumn{1}{l|}{relocate-human}  & 0.1            & -0.1  & 0.2           & 0.1            & -0.3          & -0.1  & -0.1\scriptsize{±0.1}           & \textbf{0.3}\scriptsize{±0.1}    \\
\multicolumn{1}{l|}{pen-expert}      & 85.1           & \textbf{114.9} & 107.0         &  111.7  & 79.1          & -3.5  & 80.9\scriptsize{±31.4}           & 105.0\scriptsize{±12.3}          \\
\multicolumn{1}{l|}{hammer-expert}   & 125.6 & 107.2 & 86.7          & 116.3          & 3.1           & 0.3   & \textbf{127.0}\scriptsize{±3.0} & \textbf{126.3}\scriptsize{±3.4}  \\
\multicolumn{1}{l|}{door-expert}     & 34.9           & 99.0  & 101.5         & 103.8          & -0.3          & 0.0   & 103.4\scriptsize{±2.8} & \textbf{105.9}\scriptsize{±0.3}  \\
\multicolumn{1}{l|}{relocate-expert} & 101.3 & 41.6  & 95.0          & \textbf{102.7} & -1.5          & -0.1  & 99.7\scriptsize{±4.2}           & \textbf{102.6}\scriptsize{±1.9}  \\ 
\bottomrule
\end{tabular}
\vspace{-7pt}
\end{table}

\begin{table}[htp]
\vspace{-3pt}
\caption{\small Success rate (\%) of CDE against other baselines on sparse-MuJoCo tasks.}
\vspace{-3pt}
\label{tab:success_rate}
\small
\centering
\begin{tabular}{@{}l|cccccc@{}}
\toprule
\multicolumn{1}{c|}{Task} & BCQ       & CQL               & IQL               & TD3+BC    & OptiDICE          & CDE                \\ \midrule
halfcheetah-medium            & 57.8\scriptsize{±13.2} & \textbf{97.6}\scriptsize{±4.1} & 76.6\scriptsize{±5.8}          & 41.6\scriptsize{±17.6} & 80.0\scriptsize{±3.4}          & 82.0\scriptsize{±8.6}           \\
walker2d-medium               & 41.0\scriptsize{±11.5} & 17.7\scriptsize{±10.4}         & 19.5\scriptsize{±4.2}          & 21.0\scriptsize{±16.7} & 38.4\scriptsize{±13.3}         & \textbf{53.0}\scriptsize{±11.7} \\
hopper-medium                 & 2.0\scriptsize{±4.0}   & 74.0\scriptsize{±5.0}          & 0.0\scriptsize{±0.0}           & 0.0\scriptsize{±0.0}   & 81.0\scriptsize{±5.7}          & \textbf{85.5}\scriptsize{±5.7}  \\
halfcheetah-medium-expert            & 24.8\scriptsize{±9.8}  & 4.2\scriptsize{±5.8}           & \textbf{95.4}\scriptsize{±4.2} & 0.0\scriptsize{±0.0}   & 90.8\scriptsize{±5.0}          & \textbf{95.2}\scriptsize{±2.9}  \\
walker2d-medium-expert               & 87.0\scriptsize{±13.4} & 61.6\scriptsize{±23.5}         & 94.6\scriptsize{±5.9}          & 32.2\scriptsize{±22.8} & 69.0\scriptsize{±18.8}         & \textbf{97.0}\scriptsize{±2.8}  \\
hopper-medium-expert                 & 20.0\scriptsize{±11.0} & 0.0\scriptsize{±0.0}           & 94.8\scriptsize{±2.8}          & 22.0\scriptsize{±10.8} & \textbf{97.4}\scriptsize{±1.0} & \textbf{97.0}\scriptsize{±1.4}  \\ \bottomrule
\end{tabular}
\vspace{-4pt}
\end{table}

Another notable observation is that CDE exceeds both AlgaeDICE and OptiDICE in most tasks. AlgaeDICE falls short because it updates the policy via high-variance policy gradients, as opposed to extracting from optimal importance ratios. OptiDICE exhibits comparable performance in full data setting because the out-of-support issue is alleviated with adequate data coverage on state-action space. To further substantiate the efficacy of the constraints on unseen regions, we further investigate into situations where offline the data is scarce and the empirical state-action occupation is sparse.

\subsection{Comparative experiments in scarce data setting}
\label{sec:scarce-data}

In this section, we examine the performances across datasets of varying sizes. Given the extreme difficulty of the Adroit "-human" tasks due to their high-dimensional space, narrow data distribution, and data scarcity (25 trajectories), we select Maze2D and sparse-MuJoCo tasks as our testing platforms. We randomly sample 1\%, 3\%, 10\%, and 30\% of trajectories from standard datasets to create our sub-datasets. The evaluation process aligns with the full dataset experiments.

\vspace{-5pt}
\begin{figure}[htp]
\centering
\includegraphics[width=0.9\linewidth]{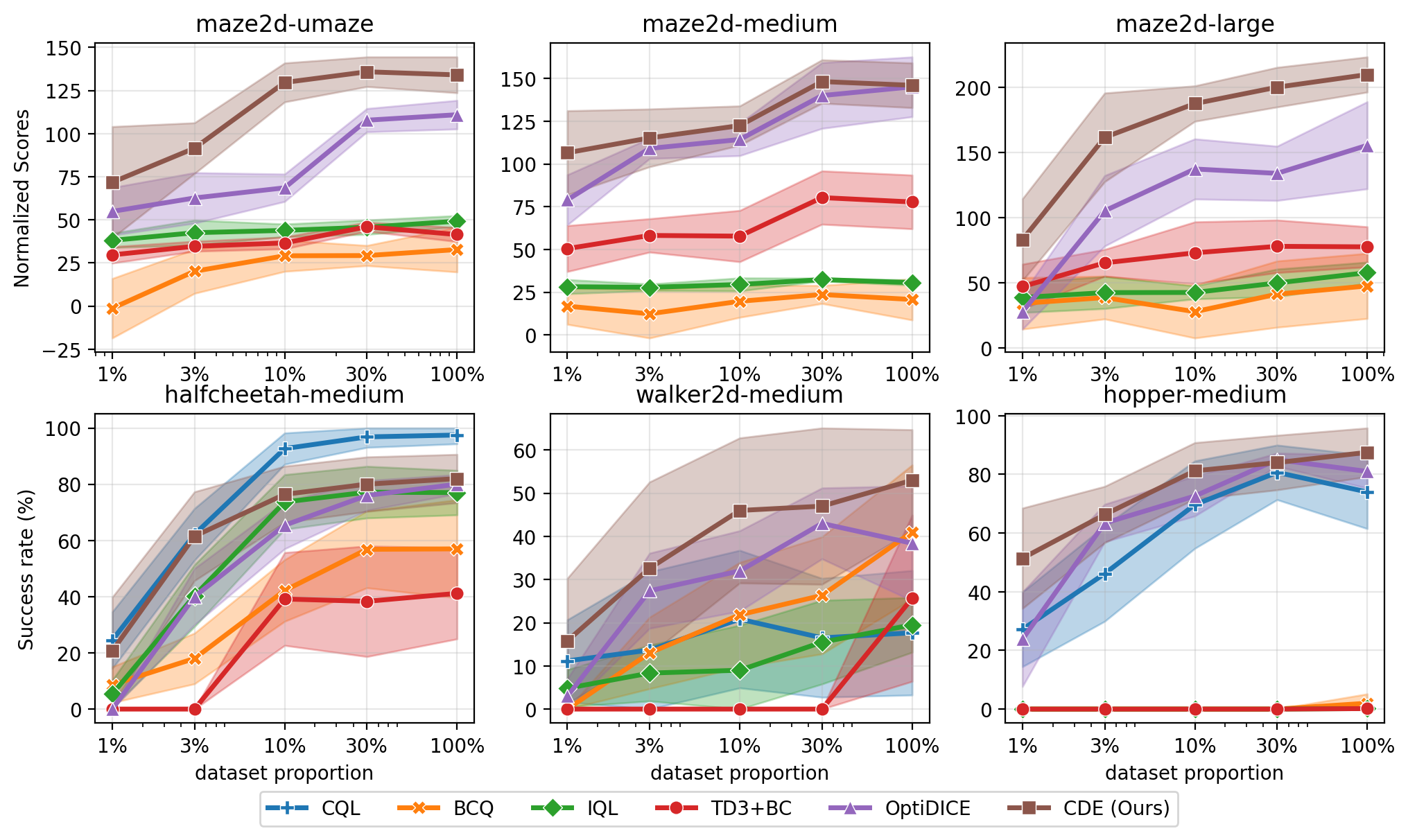}
\vspace{-5pt}
\caption{\small The results on sub-datasets with different dataset sizes.}
\label{fig:subdataset}
\vspace{-7pt}
\end{figure}

The comparison results are shown in fig.~\ref{fig:subdataset}, we defer the results on medium-expert MuJoCo tasks to Appendix~\ref{app:subdataset_me}.
CDE consistently achieves the highest scores across all dataset sizes in Maze2D domain. On the sparse-MuJoCo tasks, CDE outperforms most baselines, which may have high success rate with enough data, especially when data are scare (e.g., $\leq3\%$). Notably, OptiDICE displays less robustness against scarce data, undergoing a sharp performance drop despite achieving comparable performances in full-data setting. Moreover, considering that the original Adroit "-human" tasks already contain scarce data, OptiDICE also falls short as shown in Table \ref{tab:main_results}. 
This is because the narrow distribution of scarce data exacerbates the support mismatch problem in OptiDICE, leading to a significant bias in the importance-sampling-based off-policy evaluation. 
Conversely, CDE employs a mixed data policy, successfully mitigating large distribution shifts between the stationary distribution support of the dataset and the learned policy.


\subsection{Parameter studies}
The results underscore the effectiveness of incorporating conservatism into density estimation in CDE. However, it prompts the question: given that conservative value function estimation is also prevalent in standard-RL-based offline RL methods (e.g., CQL), why do they underperform compared to CDE? To answer this question, we conduct a parameter study on $\tilde{\epsilon}$, which upper bounds the OOD importance ratio by Theorem~\ref{thm: w bounded}, to delve deeper into the relationship between the performance of CDE and the degree of conservativeness as a parameter study. Besides, we also run the study on the distribution mixture coefficient $\zeta$. More results on parameter studies are attached in Appendix~\ref{app:param_study}.

\textbf{Max OOD IS ratio $\tilde{\epsilon}$}. We select the maze2d-large task where the agent's position directly represents the stationary distribution of the learned policy. We contrast CDE and CQL, two representative categories of pessimism augmentation in offline learning. 

\begin{figure}[htb]
\vspace{-5pt}
     \centering
     \begin{subfigure}[b]{0.23\textwidth}
         \centering
         \includegraphics[width=0.8\textwidth]{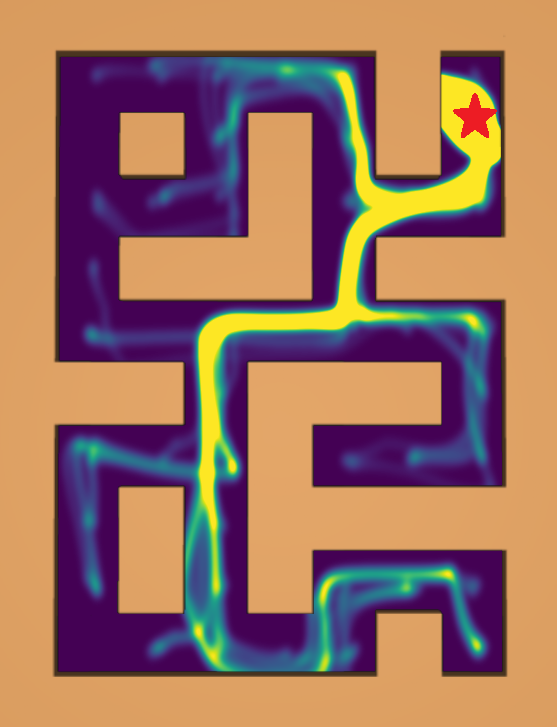}
         \caption{\small  CDE, $\tilde\epsilon=0.3$}
         \label{fig:hm1}
     \end{subfigure}
     \hfill
     \begin{subfigure}[b]{0.23\textwidth}
         \centering
         \includegraphics[width=0.8\textwidth]{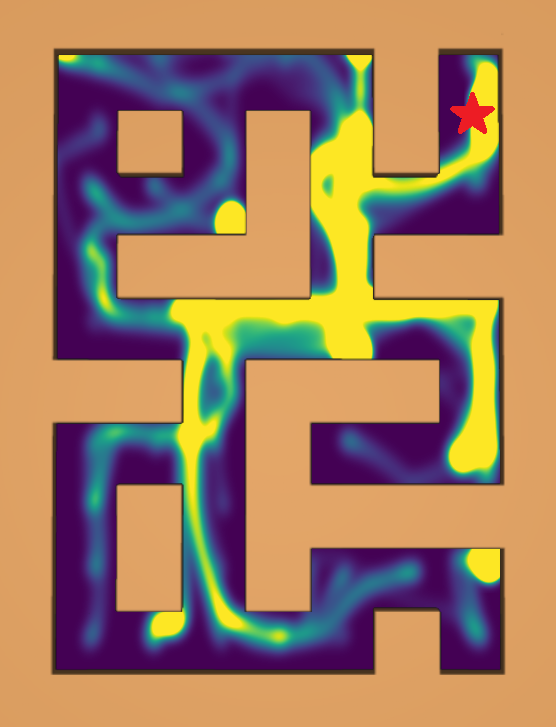}
         \caption{\small CDE, $\tilde\epsilon=0.03$}
         \label{fig:hm2}
     \end{subfigure}
     \hfill
     \begin{subfigure}[b]{0.23\textwidth}
         \centering
         \includegraphics[width=0.8\textwidth]{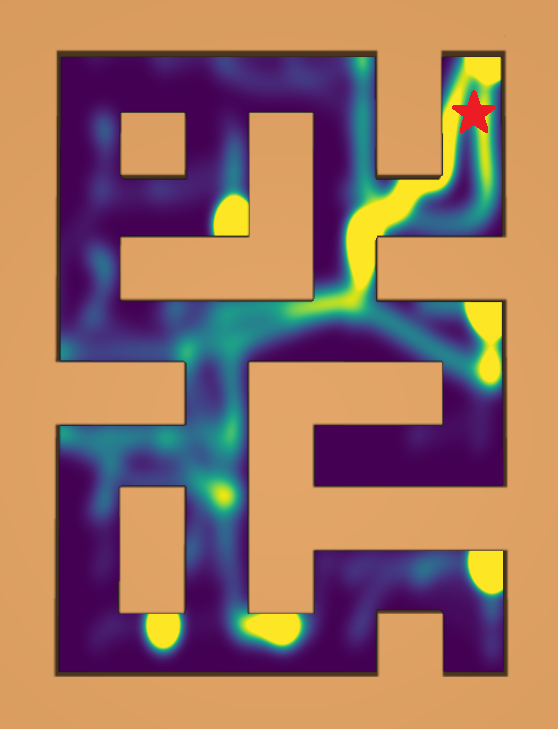}
         \caption{\small CDE, $\tilde\epsilon=0.003$}
         \label{fig:hm3}
     \end{subfigure}
     \hfill
     \begin{subfigure}[b]{0.23\textwidth}
         \centering
         \includegraphics[width=0.8\textwidth]{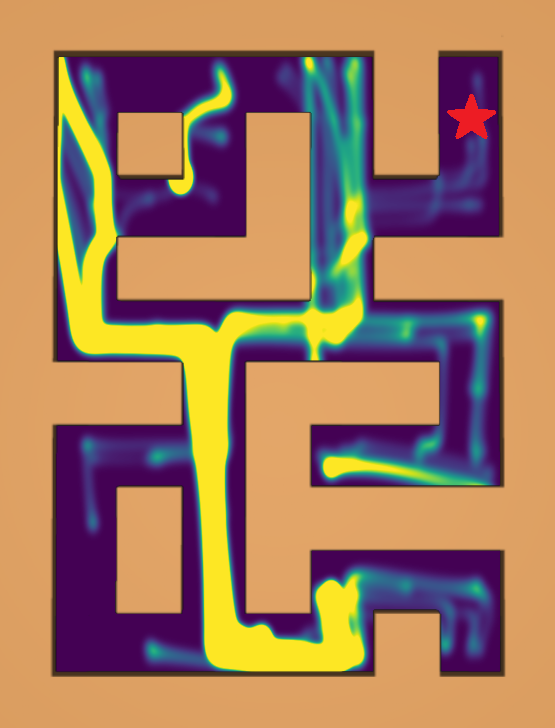}
         \caption{\small CQL}
         \label{fig:hm4}
     \end{subfigure}
    \vspace{-5pt}
    \caption{\small The heatmaps of agents with different levels of conservatism in maze2d-large environment. Yellow denotes the high occupation probability. The starting point of each trajectory may vary but the destination (red star) is the same. Smaller $\tilde\epsilon$ indicates more conservative policy. The yellow accumulation points except the destination indicate that the agent is stuck at those regions.}
    \vspace{-7pt}
    \label{fig:heatmap}
\end{figure}

The heatmap in fig.\ref{fig:heatmap} reveals that the probability mass at the midway and starting points increases as the level of conservatism escalates. 
In particular, in fig.\ref{fig:hm3}, the agent is trapped at yellow points due to the overly strict constraint. Although stronger regularization can reduce OOD extrapolation error, excessive conservatism can harm generalization and lead to significant performance degradation. 
Compared to CDE, CQL policy is less likely to be trapped in single points but still struggles to reach the destination. There are two principal reasons: 1) the sparse reward setting poses significant challenges to the precise estimation of the value function; 2) CQL incorporates an additional minimization term on the unseen Q-value to induce pessimism and balances it with a penalty coefficient, which lacks a theoretical foundation on the conservatism adjustment. In contrast, the CDE employs the closed-form relation between value and density to explicitly constrain the stationary distribution space, allowing for more precise control over the level of conservatism.

\textbf{Mixture coefficient $\zeta$}. We test the performance on sparse-MuJoCo tasks with varying $\zeta$. Fig.~\ref{fig:zeta_param} shows that the performance is not very sensitive to $\zeta$ when it is closed to 1 but the success rate decreases when $\zeta$ is smaller. This is because 1) the $f$-divergence regularizer enforces learned policy to remain close to mixture behavior policy, which tends to take unseen actions more frequently when $\zeta$ decreases; and 2) a lower $\zeta$ can cause the objective in eq.(\ref{eq:A learning}) to be dominated by OOD learning, overshadowing in-distribution learning. Therefore, we choose $\zeta=0.9$ for all tasks in practice.

\begin{figure}[htp]
\vspace{-5pt}
\centering
\includegraphics[width=0.9\linewidth]{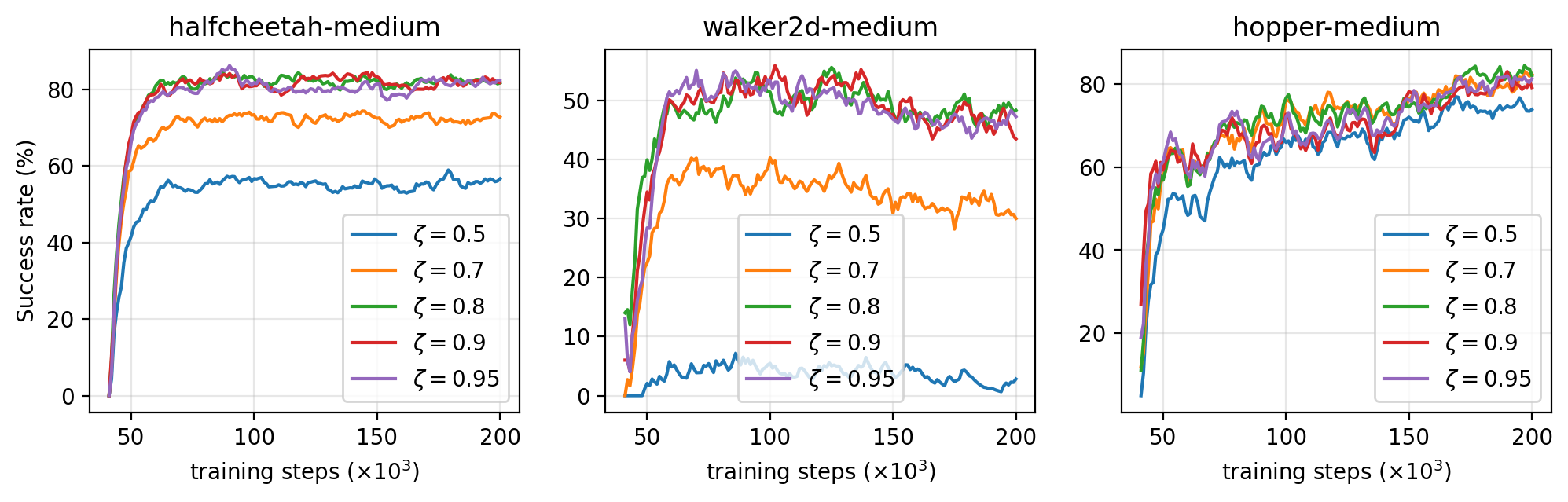}
\vspace{-5pt}
\caption{\small The performances with different $\zeta$.}
\label{fig:zeta_param}
\vspace{-7pt}
\end{figure}

\section{Conclusion}
In this work, we propose CDE, a new offline RL approach, derived from the perspective of stationary state-action occupation. CDE applies the pessimism mechanism on stationary distribution and enjoys the benefits from both fields, which makes it perform advantageously in sparse reward and scarce data settings. We further provide the theoretical analysis for the importance-sampling ratios and performance of CDE. Extensive experimental results demonstrated remarkable improvements over previous baselines in challenging tasks, highlighting its practical potential for real-world applications. 

The major limitation of CDE is that
it requires the strict alignment of initial state distribution in offline data and online environments, which restricts its performance in inconsistent settings. To overcome this drawback, one future direction can be extending the current framework to goal-conditioned RL setting. 


\subsubsection*{Acknowledgments}
The research is partly supported by the National Science Foundation under grants CNS-2047454.


\bibliography{ref}
\bibliographystyle{iclr2024_conference}

\clearpage
\appendix
\section{Supplementary Derivations and Proofs}

\subsection{Derivation of Eq.(\ref{eq:cde_unconstrained})(\ref{eq:IS estimation})}
\label{sec:deri unconstrain IS}

\textbf{Derivations of Eq.(\ref{eq:cde_unconstrained}) and Slater's Condition}. 

Given the optimization problem:
\begin{align}
    &\max_{d^{\pi}\geq 0} \E_{d^{\pi}} [r(s,a)] - \alpha D_f(d^{\pi}\| d^\gD) \nonumber \\
    s.t. &\sum_a d^{\pi}(s,a) = (1-\gamma) \rho_0(s) + \gT_* d^\pi(s), \forall s \nonumber\\
    & d^\pi(s,a) \leq \epsilon\mu(s,a), \forall s,a \in\text{supp}(\mu). \nonumber
\end{align}
The corresponding unconstrained problem is
\begin{equation}
\begin{aligned}
    \max_{d^\pi} \min_{\lambda\geq 0, v}  \gL(d^\pi, v,\lambda) \gL(d^\pi, v,\lambda)= \E_{d^\pi}[A(s,a)] &+ (1-\gamma) \E_{\rho_0}[v(s_0)]-\alpha D_f(d^\pi\|d^\gD)\\ 
    &-\E_{\mu}\left[\lambda(s,a)\left(d^\pi/\mu (s,a)-\epsilon\right) \right] \nonumber
\end{aligned}
\end{equation}
where $A(s,a):=r(s,a)+\gamma \E_{s'\sim T(\cdot|s,a)}v(s')-v(s)$.

\begin{proof}
    
The Lagrangian for constrained optimization is
\begin{align}
    \max_{d^\pi} \min_{\lambda\geq 0, v} & \gL(d^\pi, v,\lambda):=\E_{\substack{(s,a)\sim d^\pi \\ s'\sim T(\cdot|s,a)}}[r(s,a)] -\sum_{s,a\in \text{supp}(\mu)}\lambda(s,a)[d^\pi(s,a)-\epsilon\mu(s,a)] +  \nonumber\\
    &\quad\quad\quad \sum_s v(s)[(1-\gamma)\rho_0(s) + \gamma \gT_* d^\pi(s) - \sum_a d^\pi(s,a)]-\alpha D_f(d^\pi\|d^\gD)  \\
    =& \E_{d^\pi}[r(s,a)] - \E_{\mu}\left[\lambda(s,a)\left(\frac{d^\pi}{\mu}-\epsilon\right) \right] - \alpha D_f(d^\pi\|d^\gD) \nonumber\\
    &\quad\quad\quad + (1-\gamma) \E_{\rho_0}[v(s_0)] +\sum_s v(s)\sum_{\bar{s},\bar{a}}T(s|\bar{s},\bar{a})d^\pi(\bar{s},\bar{a}) - \E_{d^\pi}[v(s)]\\
    =& \E_{d^\pi}[r(s,a)] - \E_{\mu}\left[\lambda(s,a)\left(\frac{d^\pi}{\mu}-\epsilon\right) \right] - \alpha D_f(d^\pi\|d^\gD) \nonumber\\
    &\quad\quad\quad + (1-\gamma) \E_{\rho_0}[v(s_0)] +\sum_{s'} v(s')\sum_{s,a}T(s'|s,a)d^\pi(s,a) - \E_{d^\pi}[v(s)]\\
    =&\E_{d^\pi}[r(s,a)] - \E_{\mu}\left[\lambda(s,a)\left(\frac{d^\pi}{\mu}-\epsilon\right) \right] - \alpha D_f(d^\pi\|d^\gD) \nonumber\\
    &\quad\quad\quad + (1-\gamma) \E_{\rho_0}[v(s_0)] +\E_{s,a\sim d^\pi, s'\sim T(\cdot|s,a)}[v(s')] - \E_{d^\pi}[v(s)]\\
    =& \E_{d^\pi}\left[r(s,a) +\E_{s'\sim T(\cdot|s,a)}[v(s')]-v(s)\right]   - \E_{\mu}\left[\lambda(s,a)\left(\frac{d^\pi}{\mu}-\epsilon\right) \right]  \nonumber\\
    &\quad\quad\quad - \alpha D_f(d^\pi\|d^\gD) + (1-\gamma) \E_{\rho_0}[v(s_0)]\\
    =& \E_{d^\pi}[A(s,a)] + (1-\gamma) \E_{\rho_0}[v(s_0)]-\alpha D_f(d^\pi\|d^\gD)-\E_{\mu}\left[\lambda(s,a)\left(\frac{d^\pi}{\mu}-\epsilon\right) \right]
\end{align}
\end{proof}

\textbf{The Slater's Condition for Problem in Eq.(\ref{eq:cde_begin}-\ref{eq:cde_end})}.

The corresponding Slater's condition for optimization problem in eq.(\ref{eq:cde_begin}-\ref{eq:cde_end}) is that there exists a strictly feasible state-action distribution $d(s,a)$ s.t. the equality constraint in eq.(\ref{eq:cde_mid}) holds while constraint in eq.(\ref{eq:cde_end}) is satisfied with strict inequality. One strictly feasible solution is $d^{\pi_\gD}(s,a)$, the stationary state-action distribution of behavior policy $\pi_\gD$: 
\begin{enumerate}
    \item Bellman flow constraint $\sum_a d^{\pi_\gD}(s,a)=(1-\gamma)\rho_0(s) + \mathcal{T}_* d^{\pi_\gD}(s)$ holds because the $\pi_\gD$ is a valid policy and $d^{\pi_\gD}(s,a)$ is its corresponding stationary distribution;
    \item $d^{\pi_\gD}(s,a)=0<\epsilon \mu(s,a),\forall s,a\in\text{supp}(\mu)$ by definition of $\mu$. 
\end{enumerate}
Therefore, the Slater's condition holds.

\textbf{Derivations of Eq.(\ref{eq:IS estimation})}.

Replace $D_f(d^\pi\|d^\gD)$ by $D_f(d^\pi\|\hat{d}^\gD)$ as new regularizer, and use $\hat{d}^\gD(s,a):=\zeta d^\gD(s,a) + (1-\zeta)\mu(s,a)$ as proposal distribution, we can obtain
\begin{equation}
\begin{aligned}
    \gL'(w, v,\lambda)&= 
    \zeta\E_{d^\gD}\left[w(s,a) A(s,a) - \alpha f(w(s,a))\right] + (1-\gamma) \E_{\rho_0}[v(s_0)] \\
    &\quad\quad+ (1-\zeta)\E_{\mu}\left[w(s,a)(A(s,a)-\lambda(s,a)) - \alpha f(w(s,a))+\tilde\epsilon\lambda(s,a)\right]. \nonumber
\end{aligned}
\end{equation}

\begin{proof}
\begin{align}
    \gL'(w, v,\lambda)&= \E_{d^\pi}[A(s,a)] + (1-\gamma) \E_{\rho_0}[v(s_0)]-\alpha D_f(d^\pi\|\hat{d}^\gD)-\E_{\mu}\left[\lambda(s,a)\left(\frac{d^\pi}{\mu}-\epsilon\right) \right]\\
    =&\E_{\hat{d}^\gD}\left[\frac{d^{\pi}}{\hat{d}^{\gD}}A(s,a) - \alpha f\left( \frac{d^{\pi}}{\hat{d}^{\gD}} \right)\right] + (1-\gamma) \E_{\rho_0}[v(s_0)] -\E_{\mu}\left[\lambda(s,a)\left(\frac{d^\pi}{\mu}-\epsilon\right)\right] \\
    =& \E_{\hat{d}^\gD}\left[w(s,a) A(s,a) - \alpha f(w(s,a))\right] + (1-\gamma) \E_{\rho_0}[v(s_0)] -\E_{\mu}\left[\lambda(s,a)\left(\frac{d^\pi}{\mu}-\epsilon\right)\right]\\
    =& \zeta\E_{d^\gD}\left[w(s,a) A(s,a) - \alpha f(w(s,a))\right] + (1-\gamma) \E_{\rho_0}[v(s_0)] \nonumber\\
    &\quad\quad+ (1-\zeta)\E_{\mu}\left[w(s,a)(A(s,a)-\lambda(s,a)) - \alpha f(w(s,a))+\tilde\epsilon\lambda(s,a)\right],
\end{align}
\end{proof}

\subsection{Derivation of normalization for stationary distribution}
\label{sec:derivation normalize}

In practice, the optimal distribution $d^*$ may not satisfy $\sum_{s,a} d^*(s,a)=1$ due to function approximation error. Therefore, we explicitly enforce the $\sum_{s,a} d^*(s,a) = 1$~\citep{zhang2020gendice} to make $d^*$ a valid distribution, which is equivalent to $\E_{\hat{d}^\gD} w^*(s,a)=1$. 

With new normalization constraint, the corresponding unconstrained problem becomes
\begin{align}
    &\min_{\lambda\geq 0, v,\eta}\max_{w}\gL(w; v,\lambda,\eta):= 
    \zeta\E_{d^\gD}\left[w(s,a) A(s,a) - \alpha f(w(s,a))\right] + (1-\gamma) \E_{\rho_0}[v(s_0)] \nonumber\\
    &\quad\quad+ (1-\zeta)\E_{\mu}\left[w(s,a)(A(s,a)-\lambda(s,a)) - \alpha f(w(s,a))+\tilde\epsilon\lambda(s,a) \right] + \eta (1 - \E_{\hat{d}^\gD} w^*(s,a))\\
    &\quad=\zeta\E_{d^\gD}\left[w(s,a) (A(s,a)-\eta) - \alpha f(w(s,a)) \right] + (1-\gamma) \E_{\rho_0}[v(s_0)] \nonumber\\
    &\quad\quad+ (1-\zeta)\E_{\mu}\left[w(s,a)(A(s,a)-\lambda(s,a) -\eta) - \alpha f(w(s,a))+\tilde\epsilon\lambda(s,a) \right] + \eta,
\end{align}
where $\eta$ is the dual variable of the normalization constraint. Therefore, we only need to replace $\tilde{A}$ by $\tilde{A}-\eta$ for optimization w.r.t. $v_\varphi$ and $\pi_\theta$. Meanwhile, we will also update $\eta$ by gradient descent. See more details in full algorithm in Appendix~\ref{sec:full_algo}.

\subsection{Proof for Proposition \ref{prop:closed-form-w}}
\label{sec:proof prop close-form}

With assumption~\ref{ass:f}, the closed-form solution to inner maximization problem $\max_{w\geq 0} \gL'(w,v,\lambda)$ is
\begin{equation}\label{eq:closed-form-w-app}
    w^*(s,a) = (f')^{-1}(\tilde{A}(s,a)/\alpha), \nonumber
\end{equation}
where $\tilde{A}(s,a) := A(s,a) - \1\{(s,a) \in \text{supp}(\mu)\} \cdot\lambda(s,a)$ denotes \textbf{regularized advantage} function and $\1\{\cdot\}$ is the indicator function.

\begin{proof}
Let $\frac{\partial \gL'(w,v,\lambda)}{\partial w} = 0$ and we have
\begin{equation}
    \zeta \E_{d^\gD} [A(s,a) - \alpha f'(w(s,a))] + (1-\zeta) \E_{\mu} [A(s,a) -\lambda(s,a) - \alpha f'(w(s,a))] = 0
\end{equation}
Separate state-action space $\gS\times\gA$ into the support of $d^\gD$ and $\mu$, then we can get the solution as Eq.(\ref{eq:closed-form-w-app}). Meanwhile, $w^*\geq 0$ always holds by assumption~\ref{ass:f}. Therefore, the solution is valid and is exactly the optimal solution.
\end{proof}

The closed-form solution to optimal importance ratio can also be derived by Fenchel-Rockafellar dual form of $f$-divergence~\citep{nachum2019algaedice, nachum2020reinforcement}, which leads to the same results.

\subsection{Proof for Proposition \ref{prop:convex}}
\label{sec:proof-prop-convex}

The outer minimization problem $\min_{\lambda\geq 0, v} \gL'(w^*,v,\lambda)$ is a convex optimization problem. Suppose the optimal solution is $(\lambda^*,v^*)$, then $\lambda^*$ has a closed-form solution
\begin{equation}
    \lambda^*(s,a) = \max\{0, A^*(s,a) - \alpha f'(\tilde{\epsilon})\}, \forall s,a\in \text{supp}(\mu),\nonumber
\end{equation}
where $A^*(s,a)=r(s,a)+\gamma \E_{s'\sim T(\cdot|s,a)}v^*(s')-v^*(s)$. The optimal regularized advantage is
\begin{equation}
    \tilde{A}^*(s,a) = 
    \begin{cases}
    A^*(s,a), & (s,a) \in \text{supp}(d^\gD)\\
    \min\{\alpha f'(\tilde\epsilon), A^*(s,a)\}, &(s,a) \in \text{supp}(\mu)
    \end{cases}\nonumber
\end{equation}

\begin{proof}
Notice that the convexity of dual function, which corresponds to $g(v, \lambda) := \max_{w} \gL'(w,v,\lambda)$ in our setting, is proved by previous literature (Proposition 1, section 8.3 in \citep{luenberger1997optimization}). 

Then we prove the closed form of optimal $\lambda^*$. Consider the partial differential of $\gL'(w^*,v,\lambda)$ w.r.t $\lambda$:
\begin{align}
    \frac{\partial\gL'(w^*,v,\lambda)}{\partial \lambda}=& \frac{\partial}{\partial \lambda} (1-\zeta)\E_{\mu}\left[(f')^{-1}\left(\frac{A-\lambda}{\alpha}\right) (A-\lambda) - \alpha f\left((f')^{-1}\left(\frac{A-\lambda}{\alpha}\right)\right) + \lambda\tilde{\epsilon}\right] \\
    =& (1-\zeta)\E_{\mu}\left[((f')^{-1})'\left(\frac{A-\lambda}{\alpha}\right)\left(-\frac{1}{\alpha}\right)(A-\lambda) - (f')^{-1}\left(\frac{A-\lambda}{\alpha}\right) \right. \nonumber\\
    &\quad\quad\quad\quad \left.- \alpha f'\left((f')^{-1}\left(\frac{A-\lambda}{\alpha}\right)\right)((f')^{-1})'\left(\frac{A-\lambda}{\alpha}\right) \left(-\frac{1}{\alpha}\right) + \tilde{\epsilon}\right]\\
    =& (1-\zeta)\E_{\mu}\left[-((f')^{-1})'\left(\frac{A-\lambda}{\alpha}\right)\frac{A-\lambda}{\alpha} - (f')^{-1}\left(\frac{A-\lambda}{\alpha}\right) \right. \nonumber\\
    &\quad\quad\quad\quad \left.+ ((f')^{-1})'\left(\frac{A-\lambda}{\alpha}\right)\frac{A-\lambda}{\alpha} + \tilde{\epsilon}\right]\\
    =& (1-\zeta)\E_{\mu} \left[-(f')^{-1}\left(\frac{A-\lambda}{\alpha}\right) + \tilde{\epsilon}\right]
\end{align}
We omit $(s,a)$ for $A$ and $\lambda$ functions for brevity. By assumption~\ref{ass:f}, $f$ is convex and $f'$ is monotonic increasing. Therefore, when $A(s,a)\leq \alpha f'(\tilde{\epsilon})$, the gradient of $\lambda$ is always non-negative for $\lambda \geq 0$; otherwise, the gradient equals to zero when $\lambda = A(s,a) - \alpha f'(\tilde{\epsilon})$. 

Therefore, the optimal solution of $\lambda$ is 
\begin{equation}
    \lambda^*(s,a) = \max\{0, A(s,a) - \alpha f'(\tilde{\epsilon})\}.
\end{equation}
Plug-in the $\lambda^*$ to Proposition~\ref{prop:closed-form-w} and then we can get the optimal regularized advantage function $\tilde{A}^*$.
\end{proof}

\subsection{Proof for Proposition \ref{prop:theory-w-bound}}
\label{sec:proof prop w bound}

With assumption~\ref{ass:f}, the theoretical optimal importance ratio is upper bounded by $w^*(s,a) \leq \tilde{\epsilon},\forall (s,a)\in\text{supp}(\mu)$.

\begin{proof}
Combine \eqref{eq:closed-form-w} and \eqref{eq:closed-form-lambda},
\begin{align}
    w^*(s,a) &:= (f')^{-1}\left(\frac{A(s,a) - \lambda^*(s,a)}{\alpha}\right)\\
    &=  (f')^{-1}\left(\frac{A(s,a) - \max\{0, A(s,a) - \alpha f'(\tilde\epsilon)\}}{\alpha}\right)\\
    &= (f')^{-1} \left(\min\left\{\frac{A(s,a)}{\alpha}, f'(\tilde\epsilon)\right\}\right)
\end{align}
By Assumption \ref{ass:f}, $f'$ is strictly increasing and so is $(f')^{-1}$. As a result,
\begin{align}
w^*(s,a)= (f')^{-1} \left(\min\left\{\frac{A(s,a)}{\alpha}, f'(\tilde\epsilon)\right\}\right) = \min\{(f')^{-1}(A(s,a)/\alpha), \tilde\epsilon\}
\end{align}
\end{proof}

\subsection{Proof for Theorem \ref{thm: w bounded}}
\label{sec:proof thm w bound}
We first give the following lemma.
\begin{lemma}\label{lemma: lemma 1}
    Suppose that 1) the action space is $d$-dim, i.e., $\gA \subset \mathbb{R}^d$, 2) the diameter of $\gA$ is $M$, i.e., $\|a_1-a_2\|_{\infty}\leq M, \forall a_1,a_2\in\gA$
    , and 3) there are $N$ action samples from $\mu$ given any state $s\in\gD$, denoted by $(s, a_1),\dots, (s,a_N)$, and $\mu$ is a uniform distribution over OOD action space. 
    Let $\delta > 0$, $(s,a)\in\gD$, $\tilde a\in \gA_\text{OOD}(s)$. We have 
    \begin{equation}
    \mathbb{P} \left(\min_{i=1,\dots,N} \|\tilde{a} - a_i\|_\infty > \delta\right) \leq \left(1-\frac{\delta^d - \Delta a^d}{M^d}\right)^N
    \end{equation}
\end{lemma}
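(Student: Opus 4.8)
The plan is to bound the probability that a single OOD sample $a_i$ fails to land in the $\delta$-ball around $\tilde a$, and then use independence across the $N$ samples. First I would recall that $\mu(\cdot\,|\,s)$ is uniform over $\gA_\text{OOD}(s)$, so the chance that one sample $a_i$ lies within $\ell_\infty$-distance $\delta$ of $\tilde a$ equals the ratio of the volume of $\{a \in \gA_\text{OOD}(s) : \|a-\tilde a\|_\infty \le \delta\}$ to the volume of $\gA_\text{OOD}(s)$. The numerator I would lower-bound: the $\ell_\infty$-ball $B_\infty(\tilde a,\delta)$ has volume $(2\delta)^d$ up to the normalization convention, but once we intersect with $\gA_\text{OOD}(s)$ we lose at most the part of that ball that overlaps the in-distribution region, whose total volume is controlled by $\Delta a$ (each of the $n$ in-distribution actions carves out a region of diameter $\le 2\Delta a$). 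Writing everything in the $d$-th-power form $\delta^d$, $\Delta a^d$, $M^d$ suggested by the statement (i.e. absorbing the factors of $2$ into the volume convention), the per-sample hit probability is at least $(\delta^d - \Delta a^d)/M^d$, since the denominator $\mathrm{vol}(\gA_\text{OOD}(s)) \le \mathrm{vol}(\gA) \le M^d$.

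Next, since the $N$ samples $a_1,\dots,a_N$ are drawn i.i.d.\ from $\mu(\cdot\,|\,s)$, the event $\{\min_i \|\tilde a - a_i\|_\infty > \delta\}$ is the intersection of $N$ independent events each of probability at most $1 - (\delta^d - \Delta a^d)/M^d$. Multiplying gives
\begin{equation}
\mathbb{P}\left(\min_{i=1,\dots,N}\|\tilde a - a_i\|_\infty > \delta\right) \le \left(1 - \frac{\delta^d - \Delta a^d}{M^d}\right)^N,
\end{equation}
which is exactly the claimed bound. One should note the bound is only non-vacuous when $\delta^d > \Delta a^d$, i.e.\ $\delta > \Delta a$; for $\delta \le \Delta a$ the right-hand side is $\ge 1$ and the inequality holds trivially, so no separate case analysis is needed.

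The main obstacle — really the only non-routine point — is making the geometric volume estimate rigorous: one must argue carefully that intersecting $B_\infty(\tilde a,\delta)$ with the OOD set $\gA_\text{OOD}(s)$ removes at most $\Delta a^d$ worth of volume (in the chosen units), using the definition $\gA_\text{OOD}(s) = \{a : \min_i \|a - a^{(i)}\|_\infty \ge \Delta a\}$ and the fact that $\tilde a$ itself is OOD so $B_\infty(\tilde a, \Delta a)$ is already clear of the in-distribution centers; the part of $B_\infty(\tilde a,\delta)$ that can intersect the in-distribution region is then confined to an annular shell whose volume is at most $\delta^d - (\delta-\Delta a)^d \le \Delta a^d$ when $\delta \ge \Delta a$, or one can simply bound the excised volume crudely by $\Delta a^d$ directly. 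I would keep this step short and cite Appendix~\ref{app:delta_a} for the precise meaning of $\Delta a$, then close with the independence argument above.
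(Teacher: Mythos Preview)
Your proposal is correct and follows essentially the same route as the paper: compute the single-sample hit probability as a volume ratio for the uniform distribution on the OOD set, lower-bound the numerator by $\text{Vol}(B_\infty(\tilde a,\delta)) - \text{Vol}(B_\infty(a,\Delta a))$, upper-bound the denominator by $\text{Vol}(B_\infty(a,M))$ so that the $2^d$ factors cancel to give $(\delta^d-\Delta a^d)/M^d$, and then multiply over the $N$ independent samples. One caveat: the annular-shell argument you sketch in the last paragraph does not work --- the inequality $\delta^d-(\delta-\Delta a)^d\le \Delta a^d$ is false for $d\ge 2$ (take $d=2$, $\delta=2$, $\Delta a=1$), and ``clear of the in-distribution centers'' does not imply clear of the in-distribution \emph{balls} --- so use your ``crude'' alternative of simply subtracting the in-distribution ball volume, which is exactly what the paper does (it treats $\gA_\text{OOD}(s)$ as $\gA\setminus B_\infty(a,\Delta a)$ for the single dataset action $a$).
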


Now we consider the theorem:

Suppose that 1) the action space is $d$-dim, i.e., $\gA \subset \mathbb{R}^d$, 2) the diameter of $\gA$ is $M$, i.e., $\|a_1-a_2\|_{\infty}\leq M, \forall a_1,a_2\in\gA$, and 3) there are at least $N$ OOD action samples from $\mu$ given any state $s\in\gD$. When the continuity assumption \ref{ass: Lipschitz A} holds, with probability at least $1-\delta,\delta>0$, we have
\begin{equation}
\tilde{w}^*(s, a) \leq (f')^{-1}\left(f'(\epsilon) + \frac{\xi}{\alpha} + \frac{L}{\alpha}\left(\Delta a^d + \frac{M^d}{N}\log \frac{1}{\delta}\right)^{1/d}\right), \quad\forall (s,a)\in\text{supp}(\mu)\nonumber
\end{equation}
where $\xi$ is the maximum residual error of OOD regression in Eq.(\ref{eq:A learning}), $\Delta a$ is the radius of in-distribution region as previously defined.

\begin{proof}
Let $B_\infty(x,y) = \{x'\in\mathbb{R}^d: \|x-x'\|_{\infty}\leq y\}$ denote the $d$-dim Euclidean Ball under $\|\cdot\|_\infty$. The volume of $B_\infty (x,y)$ is then given by $\text{Vol}(B_\infty (x,y)) = 2^d y^d$.
We have
\begin{align}
    \mathbb{P} (\|\tilde{a} - a_1\|_\infty > \delta)
    =  1- \mathbb{P} (\|\tilde{a} - a_1\|_\infty \leq \delta)
    =  1- \mathbb{P} (a_1 \in B_\infty(\tilde{a}, \delta))
\end{align}
Recall that $(s, a_1),\dots, (s,a_N)$ are $i.i.d.$ samples from uniform distribution on $\gA \backslash B_\infty(a, \Delta a)$. Thus, we can establish the following equality
\begin{align}
    \mathbb{P} (a_1 \in B_\infty(\tilde{a}, \delta)) &= \int_{\mathbb{R}^d} \mathbf{1}\{x\in B_\infty(\tilde{a}, \delta)\}\mu(x) dx\\
    & = \int_{\mathbb{R}^d} \mathbf{1}\{x\in B_\infty(\tilde{a}, \delta)\}\frac{\mathbf{1}\{x\in \gA\backslash B_\infty(a,\Delta a)\}}{\text{Vol}(\gA \backslash B_\infty(a,\Delta a))} dx\\
    & = \frac{1}{{\text{Vol}(\gA \backslash B_\infty(a,\Delta a))}}\int_{\mathbb{R}^d} \mathbf{1}\{x\in B_\infty(\tilde{a}, \delta)\cap\gA\backslash B_\infty(a,\Delta a)\} dx\\
    &= \frac{\text{Vol}( B_\infty(\tilde{a}, \delta)\cap\gA\backslash B_\infty(a,\Delta a))}{{\text{Vol}(\gA \backslash B_\infty(a,\Delta a))}}
\end{align}
Since the action space $\gA$ is bounded with radius $M$, 
\begin{equation}
    \text{Vol}(\gA \backslash B_\infty(a,\Delta a)) \leq \text{Vol}(B_\infty(a, M))
\end{equation}
In addition, notice that
\begin{equation}
    \text{Vol}( B_\infty(\tilde{a}, \delta)\cap\gA\backslash B_\infty(a,\Delta a)) \geq \text{Vol}(B_\infty(\tilde{a}, \delta)) - \text{Vol}(B_\infty (a, \Delta a))
\end{equation}
Combining the above inequalities and plugging in the formula for $d$-dim ball under $\|\cdot\|_\infty$, we have
\begin{align}
    &\mathbb{P} (\|\tilde{a} - a_1\|_\infty > \delta)\\
    = & 1- \mathbb{P} (a_1 \in B_\infty(\tilde{a}, \delta))\\
    \leq & 1-\frac{\text{Vol}(B_\infty(\tilde{a},\delta)) - \text{Vol}(B_\infty(a, \Delta a))}{\text{Vol}( B_\infty(a, M))}\\
    = & 1-\frac{\delta^d - \Delta a^d}{M^d}
\end{align}
By independence between the OOD samples
    \begin{align}
    &\mathbb{P} \left(\min_{i=1,\dots,N} \|\tilde{a} - a_i\|_\infty > \delta\right) \\
    =& \mathbb{P} \left(\bigcap_{i=1}^N  \{\|\tilde{a} - a_i\|_\infty > \delta\}\right) 
    = \mathbb{P} (\|\tilde{a} - a_1\|_\infty > \delta) ^N
    \leq \left(1-\frac{\delta^d - \Delta a^d}{M^d}\right)^N
    \end{align}
This finishes the proof.
\end{proof}
As a remark, if we consider $\|\cdot\|_p$ instead of $\|\cdot \|_\infty$, the result would still be the same.
Now we give the proof of Theorem \ref{thm: w bounded}.
\begin{proof}
    Let $(s,a) \in \gD$ and suppose that $(s,a_1), \dots, (s,a_N)$ are the $i.i.d.$ samples from $\mu$.
    Let $a' \in \{a_1, \dots, a_N\}$ be the OOD sample that is closest to $a$ under $\|\cdot\|_\infty$ (i.e., $a' = \arg\min_{x\in\{a_1,\dots, a_N\}} \|x - a\|_{\infty}$). Since the maximum regression residual error is $\xi$, we have
    \begin{equation}
        \tilde{A}_{\phi}(s, a') \leq \alpha f'(\tilde\epsilon) + \xi.
    \end{equation}
    Then, by assumption \ref{ass: Lipschitz A}, we have
    \begin{align}\label{eqt: thm1 proof eqt1}
        \tilde{A}_{\phi}(s,a) \leq \tilde{A}_{\phi}(s, a') + |\tilde{A}_{\phi}(s, a) - \tilde{A}_{\phi}(s, a')| \leq \alpha f'(\tilde\epsilon) + L\cdot \|a -  a'\|_\infty + \xi
    \end{align}
    Let $\tilde\delta > 0$ and $\delta' = \frac{\alpha}{L}(f'(\tilde\epsilon+\tilde\delta) -f'(\tilde\epsilon) - \frac{\xi}{\alpha})$. 
    Suppose $\delta' > 0$, by Lemma \ref{lemma: lemma 1}, and using the fact that $1+x \leq e^x,\ \forall x\in \mathbb{R}$, we have
    \begin{equation}\label{eqt: apply lemma 1}
        \mathbb{P}(\|a' - a\|_{\infty} \leq \delta') \geq 1-\left(1-\frac{\delta'^d - \Delta a^d}{M^d}\right)^N \geq 1-e^{-N\frac{\delta'^d - \Delta a^d}{M^d}}
    \end{equation}
    Combine \eqref{eqt: thm1 proof eqt1} and \eqref{eqt: apply lemma 1}, we have, with probability at least $1-e^{-N\frac{\delta'^d - \Delta a^d}{M^d}}$, 
    \begin{equation}\label{eqt: A high probability bound}
    \tilde{A}_{\phi}(s,a) 
    \leq \alpha f'(\tilde\epsilon) + L \delta' + \xi
    = \alpha f'(\tilde\epsilon + \tilde \delta)
    \end{equation}
    
    Recall that $\tilde w^*(s,a) := (f')^{-1}(\tilde{A}_{\phi}(s,a)/\alpha)$. By \eqref{eqt: A high probability bound}, we have 
    \begin{equation}\label{eqt: w high probability bound}
    \tilde w^*(s,a) 
    := (f')^{-1}(\tilde{A}_{\phi}(s,a)/\alpha) 
    \leq (f')^{-1} ( f'(\tilde\epsilon + \tilde \delta)) 
    = \tilde\epsilon + \tilde \delta
    \end{equation}
    with probability at least $1-e^{-N\frac{\delta'^d - \Delta a^d}{M^d}}$, where $\delta' = \frac{\alpha}{L}(f'(\tilde\epsilon+\tilde\delta) -f'(\tilde\epsilon) - \frac{\xi}{\alpha})$. The inequality step in \eqref{eqt: w high probability bound} follows from the fact that $f'$ is increasing.

    Let $\delta \in (0,1)$.
    Consider $\tilde\delta = (f')^{-1}(f'(\tilde\epsilon) + \frac{\xi}{\alpha} + \frac{L}{\alpha}(\Delta a^d + \frac{M^d}{N}\log \frac{1}{\delta})^{\frac{1}{d}}) - \tilde\epsilon$. First, we verify $\delta' > 0$ with this choice of $\tilde\delta$.
    \begin{align}
        \delta' & := \frac{\alpha}{L}\left(f'(\tilde\epsilon+\tilde\delta) - f'(\tilde\epsilon) - \frac{\xi}{\alpha}\right)\\
        & = \frac{\alpha}{L}\left(f'(\tilde\epsilon) + \frac{\xi}{\alpha} + \frac{L}{\alpha}\left(\Delta a^d + \frac{M^d}{N}\log \frac{1}{\delta}\right)^{\frac{1}{d}} - f'(\tilde\epsilon) - \frac{\xi}{\alpha}\right)\\
        & = \left(\Delta a^d + \frac{M^d}{N}\log \frac{1}{\delta}\right)^{\frac{1}{d}}\\
        & > 0
    \end{align}
    Substitute $\tilde\delta$ back into \eqref{eqt: w high probability bound}. We get
    \begin{align}
        w^*(s,a) &\leq \tilde\epsilon + (f')^{-1}\left(f'(\tilde\epsilon) + \frac{\xi}{\alpha}+ \frac{L}{\alpha}\left(\Delta a^d + \frac{M^d}{N}\log \frac{1}{\delta}\right)^{\frac{1}{d}}\right) - \tilde\epsilon\\
        &= (f')^{-1}\left(f'(\tilde\epsilon) + \frac{\xi}{\alpha}+\frac{L}{\alpha}\left(\Delta a^d + \frac{M^d}{N}\log \frac{1}{\delta}\right)^{\frac{1}{d}}\right) 
    \end{align}
    with probability of at least 
    \begin{align}
        & 1-e^{-N\frac{\delta'^d - \Delta a^d}{M^d}}\\
        = & 1- \exp\left(-N\frac{(\frac{\alpha}{L}(f'(\tilde\epsilon+\tilde\delta) -f'(\tilde\epsilon) - \frac{\xi}{\alpha}))^d - \Delta a^d}{M^d}\right)\\
        = & 1- \exp\left(-N\frac{(\frac{\alpha}{L}( f'(\tilde\epsilon) + \frac{\xi}{\alpha} + \frac{L}{\alpha}(\Delta a^d + \frac{M^d}{N}\log \frac{1}{\delta})^{\frac{1}{d}}  -f'(\tilde\epsilon) -\frac{\xi}{\alpha}))^d - \Delta a^d}{M^d}\right)\\
        = & 1- \exp\left(-N\frac{\Delta a^d + \frac{M^d}{N}\log \frac{1}{\delta} - \Delta a^d}{M^d}\right)\\
        = &1-\delta
    \end{align}
    This finishes the proof of Theorem \ref{thm: w bounded}.
\end{proof}

\begin{subsection}{Proof of Theorem \ref{thm: performance gap}}
\label{sec: proof thm performance bound}
In this section, we consider the performance of our policy as the sample size $N$ grows.

Let $d^\gD$ denote the data distribution from which $\gD$ is obtained. Thus $\gD$ can be viewed as $N$ $i.i.d.$ samples from $d^\gD$. In this section, we use the notation with subscript $\gD_N$ to denote $\gD$ to address the number of data and avoid ambiguity.

Recall that $\pi_\theta$ minimizes the empirical objective $\frac{1}{N} \sum_{i=1}^{N} \KL(d^*(s_i)\pi_\theta(\cdot|s_i)\|d^*(s_i,\cdot))$ as an estimation in Eq.(\ref{eq: policy learning}).

We make following assumptions:
\begin{assumption}\label{ass: GC}
    Denote the space of parameter $\theta$ in the policy extraction step by $\Theta$. Let $g_\theta(s):= \KL (\pi_\theta(\cdot|s) \| \pi^*(\cdot|s))$, where $d^*(s)$ denotes the state marginal of $d^*$. Then, the function class $\gF=\{g_\theta(\cdot): \gS \rightarrow \mathbb{R} |  \theta \in \Theta\}$
    is $d^\gD$-Donsker. And  $\Var_{s\sim d^\gD(s)}(g_\theta(s)) < \infty$ for all $\theta \in \Theta$.
\end{assumption}
Assumption \ref{ass: GC} guarantees the consistency of $\theta$ trained with dataset $\gD_N$, which is a common assumption when considering training with finite samples \cite{van2000asymptotic, geer2000empirical, ma2005robust, cheng2010bootstrap}. A sufficient condition for Assumption \ref{ass: GC} is $\Theta$ being bounded, together with a Lipschitz-type condition on $\gF$ \cite{van2000asymptotic}.
\begin{assumption}\label{ass:TV inequality}
    Suppose the policy extracted from Eq.(\ref{eq: policy learning}) is $\pi$, define the state marginal of $d^\gD,d^\pi,d^*$ as $d^\gD(s),d^\pi(s),d^*(s)$, then
    \begin{equation}
        \TV(d^{\pi}(s)\| d^*(s)) \leq \TV(d^\gD(s)\| d^*(s))
    \end{equation}
\end{assumption}
The Assumption~\ref{ass:TV inequality} holds in general because the performance of learned policy $\pi$ is empirically in between $\pi^\gD$ and $\pi^*$, indicating that the stationary state distribution of learned policy $d^\pi$ is closer to the optimal state distribution than dataset distribution.

Then we introduce the following lemma based on Lemma 6 in \cite{xu2020error}:
\begin{lemma}
    \label{lemma: performance gap decompositon}
    Suppose the maximum reward is $R_{\text{max}}=\max_{s,a}\|r(s,a)\|$, $V^\pi(\rho_0):=\E_{s_0\sim\rho_0}[V^{\pi}(s_0)]$ denote the performance given a policy $\pi$, then with assumption~\ref{ass:TV inequality},
    \begin{equation}
    |V^{\pi}(\rho_0) - V^*(\rho_0)|\leq \frac{2R_{\text{max}}}{1-\gamma}\TV(d^*(s) \| d^\gD(s))
    + \frac{2R_{\text{max}}}{1-\gamma}\E_{d^\gD(s)}[\TV(\pi(\cdot|s) \| \pi^*(\cdot|s))],
    \end{equation}
    where $d^{\pi}(s), d^\gD(s)$ denote the state marginal of $d^{\pi}, d^\gD$ and $d^{\gD}\pi (s,a):=d^\gD(s)\pi(a|s)$.
\end{lemma}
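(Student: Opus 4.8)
The plan is to reduce the value gap to a total-variation distance between stationary state--action distributions, and then split that distance into a state-marginal part and a conditional-action part. First I would use the occupancy-measure identity $V^{\pi}(\rho_0) = \frac{1}{1-\gamma}\E_{(s,a)\sim d^{\pi}}[r(s,a)]$, which follows directly from the definition $d^\pi(s,a) = (1-\gamma)\sum_t \gamma^t \Pr(s_t=s,a_t=a)$, and likewise $V^{*}(\rho_0) = \frac{1}{1-\gamma}\E_{d^{*}}[r(s,a)]$ with $d^{*}=d^{\pi^{*}}$. Subtracting the two and bounding the reward by $R_{\text{max}}$ gives $|V^{\pi}(\rho_0)-V^{*}(\rho_0)| \le \frac{R_{\text{max}}}{1-\gamma}\|d^{\pi}-d^{*}\|_1 = \frac{2R_{\text{max}}}{1-\gamma}\TV(d^{\pi}\|d^{*})$, using $\|P-Q\|_1 = 2\,\TV(P\|Q)$. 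This identity is what underlies Lemma~6 of \cite{xu2020error}, and it collapses the whole problem into controlling the single joint TV distance $\TV(d^{\pi}\|d^{*})$.

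Next I would decompose this joint distance into a state term and an action term. Writing $d^{\pi}(s,a)=d^{\pi}(s)\pi(a|s)$ and $d^{*}(s,a)=d^{*}(s)\pi^{*}(a|s)$ and inserting the product intermediate $d^{\pi}(s)\pi^{*}(a|s)$, the triangle inequality for the $L_1$ norm yields $\|d^{\pi}-d^{*}\|_1 \le \sum_{s}|d^{\pi}(s)-d^{*}(s)| + \sum_s d^{\pi}(s)\,\|\pi(\cdot|s)-\pi^{*}(\cdot|s)\|_1$, that is, $\TV(d^{\pi}\|d^{*}) \le \TV(d^{\pi}(s)\|d^{*}(s)) + \E_{s\sim d^{\pi}(s)}[\TV(\pi(\cdot|s)\|\pi^{*}(\cdot|s))]$. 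The first summand is a pure state-marginal mismatch and the second is the expected per-state policy mismatch, which already matches the two-term shape of the claimed bound up to the measures involved.

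Finally I would invoke Assumption~\ref{ass:TV inequality} to convert the state-marginal term into the dataset marginal: $\TV(d^{\pi}(s)\|d^{*}(s)) \le \TV(d^{\gD}(s)\|d^{*}(s)) = \TV(d^{*}(s)\|d^{\gD}(s))$ by symmetry of TV, which produces the first term of the claim. The delicate point, and what I expect to be the main obstacle, is that the clean decomposition naturally weights the conditional term by the learned policy's own state marginal $d^{\pi}(s)$, whereas the statement reports it under the dataset marginal $d^{\gD}(s)$; routing the intermediate product distribution through $d^{\gD}(s)$ instead would introduce a spurious $\TV(d^{\pi}(s)\|d^{\gD}(s))$ factor that must be absorbed. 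Reconciling these measures---i.e.\ arguing under the lemma's mild assumptions, together with the relation $d^{*}(s,a)=\tilde w^{*}(s,a)\hat d^{\gD}(s,a)$ that ties $d^{*}$, $d^{\pi}$ and $d^{\gD}$ together, that the conditional mismatch may be taken under $d^{\gD}(s)$ without an additional state-marginal penalty---is the only nonroutine step; everything else is the standard $L_1$/TV bookkeeping outlined above.
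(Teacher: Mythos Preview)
Your plan is essentially the paper's own: reduce $|V^\pi(\rho_0)-V^*(\rho_0)|$ to $\frac{2R_{\max}}{1-\gamma}\TV(d^\pi\|d^*)$ via the occupancy identity, then split the joint TV by a triangle inequality through a product intermediate, and finally apply Assumption~\ref{ass:TV inequality} to the state-marginal piece. The only structural difference is the choice of intermediate: you route through $d^\pi(s)\pi^*(a|s)$, whereas the paper routes through $d^*(s)\pi(a|s)$. Consequently the paper obtains
\[
\TV(d^\pi\|d^*)\ \le\ \TV\bigl(d^\pi(s)\|d^*(s)\bigr)\ +\ \E_{s\sim d^*(s)}\bigl[\TV(\pi(\cdot|s)\|\pi^*(\cdot|s))\bigr],
\]
with the conditional term weighted by $d^*(s)$, not $d^\pi(s)$ as in your decomposition.

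The obstacle you flag---that the conditional term does not naturally appear under $d^{\gD}(s)$---is real, and you should not look for a clever fix: the paper does not resolve it either. Its proof ends with $\E_{d^*(s)}[\TV(\pi\|\pi^*)]$, so the final line of the proof does not literally match the $\E_{d^{\gD}(s)}[\cdot]$ written in the lemma statement. In other words, the discrepancy you worried about is a mismatch between the lemma statement and its own proof in the paper, not a missing argument you were supposed to supply. Your ``nonroutine step'' is therefore not needed for the paper's version of the argument; if you simply switch your intermediate to $d^*(s)\pi(a|s)$ you reproduce the paper's proof verbatim (and inherit the same $d^*$ vs.\ $d^{\gD}$ inconsistency in the conditional weight).
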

\begin{proof}
    \begin{align}
    |V^{\pi}(\rho_0) - V^*(\rho_0)| = & \frac{1}{1-\gamma}\left|\E_{(s,a)\sim d^{\pi}}[r(s,a)] - \E_{(s,a)\sim d^{*}}[r(s,a)]\right| \\
    \leq & \frac{R_{\text{max}}}{1-\gamma}\sum_{s,a}|d^{\pi}(s,a) - d^*(s,a)|\\
    = & \frac{2R_{\text{max}}}{1-\gamma}\TV(d^{\pi} \| d^*)\\
    \leq & \frac{2R_{\text{max}}}{1-\gamma}\left(\TV(d^{\pi} \| d^*(s)\cdot \pi) + \TV(d^*(s) \cdot\pi \| d^*) \right) \label{eq:TV triangle}\\
    = & \frac{2R_{\text{max}}}{1-\gamma} \TV(d^{\pi}(s) \| d^*(s)) + \frac{2R_{\text{max}}}{1-\gamma}\E_{d^*(s)}[\TV(\pi(\cdot|s) \| \pi^*(\cdot|s))]\\
    \leq & \frac{2R_{\text{max}}}{1-\gamma}\TV(d^\gD(s)\|d^*(s))
    + \frac{2R_{\text{max}}}{1-\gamma}\E_{d^*(s)}[\TV(\pi(\cdot|s) \| \pi^*(\cdot|s))]
    \end{align}
    The Eq.(\ref{eq:TV triangle}) follows the triangle inequality of TV distance.
\end{proof}


Now we give the complete statement and proof of Theorem \ref{thm: performance gap}.

\begin{theorem}
\label{thm: full performance gap}
Suppose the maximum reward is $R_{\text{max}}=\max_{s,a}\|r(s,a)\|$, let $V^\pi(\rho_0):=\E_{s_0\sim\rho_0}[V^{\pi}(s_0)]$ denote the performance given a policy $\pi$. For policy $\pi_\theta$ optimized by Eq.(\ref{eq: policy learning}) and $N$ transition data from $d^\gD$, if $\pi_\theta$ is a universal approximator, under Assumption~\ref{ass: GC} and \ref{ass:TV inequality}, we have
\begin{equation*}
V^*(\rho_0) - V^{\pi_\theta}(\rho_0)
        \leq \frac{2R_{\text{max}}}{1-\gamma} \TV(d^\gD(s)\|d^*(s)) + e_N
\end{equation*}
and $e_N$ converges in probability to zero at the rate $N^{-\frac{1}{4+h}}, \forall h >0$, i.e., $N^{\frac{1}{4+h}}e_N\xrightarrow{N\rightarrow\infty}0$ in probability.
\end{theorem}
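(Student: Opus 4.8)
The plan is to split $V^*(\rho_0)-V^{\pi_\theta}(\rho_0)$ into the deterministic state-marginal-mismatch term already isolated in Lemma~\ref{lemma: performance gap decompositon} and a statistical term $e_N$ measuring how well the extracted conditional $\pi_\theta(\cdot\mid s)$ tracks the target $\pi^*(\cdot\mid s)=d^*(s,\cdot)/d^*(s)$, and then to control $e_N$ by a standard $M$-estimation / empirical-process argument. Write $\hat\theta_N$ for the parameter returned by Eq.(\ref{eq: policy learning}). Lemma~\ref{lemma: performance gap decompositon} gives $V^*(\rho_0)-V^{\pi_{\hat\theta_N}}(\rho_0)\le \tfrac{2R_{\text{max}}}{1-\gamma}\TV(d^\gD(s)\|d^*(s))+\tfrac{2R_{\text{max}}}{1-\gamma}\,\E_{s\sim d^\gD(s)}[\TV(\pi_{\hat\theta_N}(\cdot\mid s)\|\pi^*(\cdot\mid s))]$, so one sets $e_N:=\tfrac{2R_{\text{max}}}{1-\gamma}\E_{s\sim d^\gD(s)}[\TV(\pi_{\hat\theta_N}(\cdot\mid s)\|\pi^*(\cdot\mid s))]$. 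By Pinsker's inequality and then Jensen's inequality, $e_N\le \tfrac{2R_{\text{max}}}{1-\gamma}\sqrt{\tfrac12\,\E_{s\sim d^\gD(s)}[\KL(\pi_{\hat\theta_N}(\cdot\mid s)\|\pi^*(\cdot\mid s))]}=\tfrac{2R_{\text{max}}}{1-\gamma}\sqrt{\tfrac12\,Pg_{\hat\theta_N}}$, writing $g_\theta(s):=\KL(\pi_\theta(\cdot\mid s)\|\pi^*(\cdot\mid s))$ and $Pg_\theta:=\E_{s\sim d^\gD(s)}[g_\theta(s)]$ as in Assumption~\ref{ass: GC}. It is therefore enough to show $Pg_{\hat\theta_N}=O_P(N^{-1/2})$, from which $e_N=O_P(N^{-1/4})$ and hence $N^{1/(4+h)}e_N=O_P(N^{1/(4+h)-1/4})\to 0$ in probability for every $h>0$, since $1/(4+h)<1/4$.

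To bound $Pg_{\hat\theta_N}$, note that $\pi_{\hat\theta_N}$ minimizes the empirical objective in Eq.(\ref{eq: policy learning}), which up to $\theta$-independent terms is the empirical mean $P_Ng_\theta:=\tfrac1N\sum_{i=1}^N g_\theta(s_i)$ over the states of $\gD$ (the importance weight $\tilde w^*$, equivalently the mixed behavior policy $\hat\pi^\gD$, is precisely what turns the $d^*$-expectation written in Eq.(\ref{eq: policy learning}) into an average over $s_i\sim d^\gD(s)$, which is why Assumption~\ref{ass: GC} is phrased relative to $d^\gD$). The usual comparison then gives, for any competitor $\theta$, $Pg_{\hat\theta_N}=(P-P_N)g_{\hat\theta_N}+P_Ng_{\hat\theta_N}\le \sup_{\theta'}|(P-P_N)g_{\theta'}|+P_Ng_\theta\le 2\sup_{\theta'}|(P-P_N)g_{\theta'}|+Pg_\theta$, using optimality of $\hat\theta_N$ under $P_N$ and then adding and subtracting $Pg_\theta$. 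Taking the infimum over $\theta$ and using that $\pi_\theta$ is a universal approximator — so $\pi_\theta$ can be driven arbitrarily close to $\pi^*$ and $\inf_\theta Pg_\theta=0$ — yields $Pg_{\hat\theta_N}\le 2\sup_{\theta'}|(P-P_N)g_{\theta'}|$.

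The last ingredient is the empirical-process bound on $\sup_{\theta'}|(P-P_N)g_{\theta'}|$. By Assumption~\ref{ass: GC} the class $\gF=\{g_\theta:\theta\in\Theta\}$ is $d^\gD$-Donsker, so $\sqrt N\,(P_N-P)$ converges weakly in $\ell^\infty(\gF)$ to a tight Gaussian process; in particular $\sqrt N\,\sup_{\theta'}|(P-P_N)g_{\theta'}|$ is bounded in probability, i.e.\ $\sup_{\theta'}|(P-P_N)g_{\theta'}|=O_P(N^{-1/2})$, and the finite-variance condition in Assumption~\ref{ass: GC} gives the same $N^{-1/2}$ behaviour if one instead routes the fixed-competitor term $(P_N-P)g_\theta$ through a CLT. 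Combining with the previous paragraph, $Pg_{\hat\theta_N}=O_P(N^{-1/2})$, hence $e_N=O_P(N^{-1/4})$, which is even stronger than the claimed $N^{-1/(4+h)}$ rate; plugging this into the Lemma~\ref{lemma: performance gap decompositon} decomposition finishes the proof.

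The step I expect to be the real obstacle is using the universal-approximator hypothesis to annihilate the approximation bias: one needs a sequence $\theta_k$ with $\pi_{\theta_k}\to\pi^*$ along which $Pg_{\theta_k}\to 0$, and to make that rigorous requires continuity/integrability of $s\mapsto\KL(\pi_{\theta_k}(\cdot\mid s)\|\pi^*(\cdot\mid s))$ under $d^\gD(s)$ (e.g.\ $\pi^*$ bounded away from $0$, or a dominated-convergence argument), plus a diagonal argument pairing $k=k(N)$ with the sample size when the bias is not exactly zero, and in practice an additional optimization-slack term since $\hat\theta_N$ is only an approximate empirical minimizer. The mismatch between the $d^*(s)$-expectation in Eq.(\ref{eq: policy learning}) and the $d^\gD$-sampled empirical average is the other delicate point, but it is bookkeeping handled by the importance weights, consistently with the way Assumption~\ref{ass: GC} is stated; these are exactly the ``mild assumptions'' that carry the argument.
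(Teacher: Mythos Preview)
Your proposal is correct and follows essentially the same route as the paper: apply Lemma~\ref{lemma: performance gap decompositon}, convert the conditional TV term to a population KL via Pinsker and Jensen, and then control the population KL of the empirical minimizer through the Donsker hypothesis in Assumption~\ref{ass: GC}. The only substantive difference is how the universal-approximator property is invoked: the paper uses it to assert that the \emph{empirical} objective $P_N g_{\hat\theta_N}=\tfrac1N\sum_i \KL(\pi_{\hat\theta_N}(\cdot|s_i)\|\pi^*(\cdot|s_i))$ is driven exactly to zero (fitting finitely many points), so that $Pg_{\hat\theta_N}=(P-P_N)g_{\hat\theta_N}$ and the Donsker assumption gives the $N^{-1/2}$ rate directly; you instead use the standard $M$-estimation sandwich $Pg_{\hat\theta_N}\le 2\sup_{\theta'}|(P-P_N)g_{\theta'}|+\inf_\theta Pg_\theta$ and kill the bias term at the population level. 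The paper's shortcut avoids the approximation-bias issue you flag at the end, while your use of the uniform $\sup_{\theta'}$ bound is the cleaner way to handle the data-dependence of $\hat\theta_N$ that the paper's pointwise-CLT phrasing glosses over; the two arguments are otherwise interchangeable.
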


\begin{proof}

By Lemma \ref{lemma: performance gap decompositon}, it remains to establish the vanishing rate of $e_N := \frac{2R_{\text{max}}}{1-\gamma}\E_{d^\gD}[\TV(\pi \| \pi^*)]$. By Pinsker's inequality and Jensen's inequality,
    \begin{align}
        \E_{s\sim d^\gD}[\TV(\pi(\cdot|s) \| \pi^*(\cdot|s))] \leq & \E_{s\sim d^\gD}[\sqrt{2\KL(\pi(\cdot|s) \| \pi^*(\cdot|s))}]\\
        \leq & \sqrt{2 \E_{s\sim d^\gD}[\KL (\pi(\cdot|s) \| \pi^*(\cdot|s))]}
    \end{align}
    Recall that the $\pi_\theta$ minimizes an empirical expectation
    \begin{align}
    \min_\theta \frac{1}{N} \sum_{i=1}^N \KL(d^*(s_i)\pi_\theta(\cdot|s_i)\|d^*(s_i,\cdot))
    = \frac{1}{N} \sum_{i=1}^N \KL (\pi_\theta(\cdot|s_i) \| \pi^*(\cdot|s_i)). 
    \end{align}
    i.e., the objective is equivalent to minimizing the KL divergence over policy distribution. When $\pi_\theta$ is a universal approximator, it exactly minimizes the objective to 0. 

    Use notation $g_\theta(s) = \KL (\pi(\cdot|s) \| \pi^*(\cdot|s))$ as defined in Assumption \ref{ass: GC}.
    By Assumption \ref{ass: GC}, 
    $
    \sqrt{N}(\mathbb{E}_{s\sim d^\gD}[g_\theta( s)] - \frac{1}{N} \sum_{i=1}^N g_\theta(s_i))
    $ converges in distribution to a normal distribution with mean 0 and variance $\Var_{s\sim d^\gD}(g_\theta(s))<\infty$. (see e.g., \cite{van2000asymptotic})

    As a result, for any $h>0$, 
    \begin{equation}\label{eqt: g theta conv}
        N^{\frac{1}{2+h}}\left(\mathbb{E}_{s\sim d^\gD}[g_\theta( s)] - \frac{1}{N} \sum_{i=1}^N g_\theta( s_i)\right) \xrightarrow[]{N\rightarrow\infty} 0,\quad \text{in probability}
    \end{equation}
    Therefore,
    \begin{align}
    \E_{s\sim d^\gD}[\TV( \pi_\theta(\cdot|s) \| \pi^*(\cdot|s))] \leq & \sqrt{2 \E_{s\sim d^\gD}[\KL (\pi_\theta(\cdot|s) \| \pi^*(\cdot|s))]}\\
    = & \sqrt{2 \E_{s\sim d^\gD}[g_\theta(s)]}\\
    = & \sqrt{2 \left( \E_{s\sim d^\gD}[g_\theta(s)] - \frac{1}{N} \sum_{i=1}^N g_\theta( s)\right)}
    \end{align}
    Combine with \eqref{eqt: g theta conv}, for any $h>0$, we have
    \begin{align}
        N^\frac{1}{4+h} e_N = & N^\frac{1}{4+h}\frac{2R_{\text{max}}}{1-\gamma}\E_{s\sim d^\gD}[\TV(\pi(\cdot|s) \| \pi^*(\cdot|s))] \\
        \leq & \frac{2R_{\text{max}}}{1-\gamma} \sqrt{2 N^\frac{1}{2+h/2}\left( \mathbb{E}_{s\sim d^\gD}[g_\theta(s)] - \frac{1}{N} \sum_{i=1}^N g_\theta( s)\right)}
        \xrightarrow[]{N\rightarrow\infty} 0,\ \text{in probability}
    \end{align}
    This finishes the proof.
\end{proof}
\end{subsection}
\clearpage
\section{Experiment Details}
\label{sec:app-experiment}
\subsection{Tasks and baselines}
\label{sec:task_baselines}


\textbf{Tasks.} We adopt "-v1" tasks for Maze2D and Adroit domains and "-v2" tasks for MuJoCo domain. We do not adopt antmaze tasks because the initial distributions are significantly different between training and test settings: the starting point can be anywhere in the offline dataset but is always at the same corner when tested. CDE learns the value functions by optimizing an objective (eq.(\ref{eq:IS estimation})) w.r.t. initial distribution and $d^\gD$, which is not compatible with antmaze tasks. This inconsistency also partially explain why trajectory optimization methods (e.g., decision transformer) fail but the Bellman bootstrapping methods are less affected.

We make the rewards sparse in MuJoCo tasks as stated in main text, and return thresholds (i.e., the 75-percentile of the trajectory returns in dataset) are listed in table~\ref{tab:sparse_MuJoCo_thres}. During evaluation, a trajecotory is viewed as a successful one if its reward return exceeds the threshold.

\begin{table}[htb]
\vspace{-3pt}
\caption{The return thresholds for sparse-MuJoCo tasks.}
\label{tab:sparse_MuJoCo_thres}
\centering
\begin{tabular}{|l|c|}
\hline
\textbf{Task} & \textbf{Return threshold} \\ \hline
halfcheetah-medium  & 4909.1 \\ \hline
walker2d-medium  & 3697.8 \\ \hline
hopper-medium  & 1621.5 \\ \hline
halfcheetah-medium-expert  & 10703.4 \\ \hline
walker2d-medium-expert  & 4924.8 \\ \hline
hopper-medium-expert  & 3561.9 \\ \hline
\end{tabular}
\end{table}

\textbf{Details of baselines.} We adopt the results of baselines if reported in the original paper. We rerun the baselines for these tasks using their official codes (IQL, TD3+BC, OptiDICE) or the d3rlpy library~\citep{seno2022d3rlpy} (BCQ, CQL), because 1) d3rlpy keeps the same hyperparameters as the original papers, and 2) the performance of d3rlpy is better and more stable than the original implementation for BCQ, CQL (e.g., the performances of BCQ on Maze2d and Adroit). 




\subsection{Full algorithm and practical details of CDE}
\label{sec:ful algorithm}

In this section, we present the full algorithm and implementation details. Without otherwise statements, the policies or critics defaults to be parameterized by neural networks (NN).

\subsubsection{Value functions separation}
In CDE, we learn both the V-value function and the advantage function. The former can incorporate the stochasticity of action distribution to reduce the instability, and the latter is to generalize the optimal importance ratios to OOD regions since the reward and transition probability functions for unseen transition $(s,a,r,s')$ are absent in offline datasets. Meanwhile, we take two steps to train V-value and advantage functions instead of optimizing the objective function in Eq.(\ref{eq:IS estimation}). Note that the objective can be separated into in-distribution and OOD parts:
\begin{align}
    &\zeta\E_{d^\gD}\left[w(s,a) A(s,a) - \alpha f(w(s,a))\right] + (1-\gamma) \E_{\rho_0}[v(s_0)] \nonumber\\
    &\quad+ (1-\zeta)\E_{\mu}\left[w(s,a)(A(s,a)-\lambda(s,a)) - \alpha f(w(s,a))+\tilde\epsilon\lambda(s,a)\right], \\
    =&\zeta(\E_{d^\gD}\left[w(s,a) A(s,a) - \alpha f(w(s,a))\right] + ((1-\gamma) \E_{\rho_0}[v(s_0)])) \nonumber\\
    &\quad+ (1-\zeta) (\E_{\mu}\left[w(s,a)(A(s,a)-\lambda(s,a)) - \alpha f(w(s,a))+\tilde\epsilon\lambda(s,a)\right] + (1-\gamma) \E_{\rho_0}[v(s_0)]) \label{eq:v_function_sep}
\end{align}
where the in-distribution part corresponds to the learning objective of the V-value function in Eq.(\ref{eq:v leanring}), which is also the dual form of following constrained optimization:
\begin{align}
    &\max_{d^{\pi}\geq 0} \E_{d^{\pi}} [r(s,a)] - \alpha D_f(d^{\pi}\| d^\gD) \\
    s.t. &\sum_a d^{\pi}(s,a) = (1-\gamma) \rho_0 + \gT_* d^\pi(s), \forall s,a\in \text{supp}(d^\gD).
\end{align}
The main difference of it from the previous one in Eq.(\ref{eq:Bellman_flow_constraint}) is that it constrains the state-action in the support of offline datasets. Therefore, the objective for V-value function learning in Eq.(\ref{eq:v leanring}) is still a convex optimization problem.

\subsubsection{Policy Extraction}
In policy extraction, we need to estimate the KL divergence between learned policy and mixed behavior policy $\hat{\pi}^\gD$ in eq.(\ref{eq: policy learning}). As we only train the behavior policy $\pi^\gD$ from offline dataset, we apply Jensen inequality to get an upper bound of the objective:
\begin{align}
    \KL[\pi_\theta(\cdot|s)\| \hat{\pi}^\gD(\cdot|s)] &= \E_{a\sim \pi_\theta}[\log \pi(a|s) - \log (\zeta\pi^\gD(\cdot|s) + (1-\zeta)\pi^\mu(a|s))] \\
    & \leq \E_{a\sim \pi_\theta}[\log \pi(a|s) - \zeta\log \pi^\gD(\cdot|s)-(1-\zeta)\log\pi^\mu(a|s)]
\end{align}
which is adopted in policy extraction step in practical implementation.

Meanwhile, we simply set the optimal state distribution $d^*(s)$ as a uniform distribution on the states in the successful trajectories (i.e., the trajectories with returns larger than 0).



\subsubsection{OOD action space}\label{app:delta_a}
Remember we define the OOD action region $\gA_{\text{OOD}}$ based on the $\Delta a$ in Sec.~\ref{sec:method_cde}. Here $\Delta a$ defines the radius of the in-distribution action region and $\gA_{\text{OOD}}$ can cover the action space without overlapping with in-distribution action. As we cannot precisely tell the in-distribution region given an offline dataset, which only provides one action for each state especially in continuous tasks. Therefore, we estimate it by behavioral cloning: we employ NN to approximate the behavior policy $\pi^\gD$ that outputs a Gaussian distribution $\gN(\mu^\gD(s),{\sigma^\gD}^2(s))$ for each state; then the standard deviation $\sigma^\gD(s)$ can be a measurement for the breadth of in-distribution region on. In practice, we adopt $\Delta a = \sigma^\gD(s)$ when computing $\gA_{\text{OOD}}(s)$. We further give a parameter study on $\Delta a$ in appendix~\ref{app:delta_a_param_study}.

\subsubsection{Hyperparameters}
Before training NN, we standardize the observation and reward and scale the reward by multiplying $0.1$. To extract the policy after the optimization over value functions converges, we set the warm-up training step following ~\citet{lee2021optidice} and the policy $\pi_\theta$ will not start until warm-up training ends. As shown in table~\ref{tab:alpha}, we set the same $f$-divergence coefficient $\alpha$ for each domain because the divergence between the optimal and behavior policies varies in different domains. Note this is still \textit{significantly different from previous DICE paper~\citep{lee2021optidice} that finetunes and assigns different hyperparameters for every single task}. The other shared hyperparameters are summaries in table~\ref{tab:shared_hp}. More details can be found in codes provided in supplementary materials.

\begin{table}[htb]
\vspace{-3pt}
\begin{minipage}{.43\textwidth} %
\caption{The $f$-divergence coefficient $\alpha$.}
\label{tab:alpha}
\centering
\begin{tabular}{|c|c|}
\hline
\textbf{Domain} & $\alpha$ \\ \hline
Maze2D     & 0.001 \\ \hline
Adroit      & 0.01 \\ \hline
MuJoCo      & 0.1 \\ \hline
\end{tabular}
\end{minipage} %
\begin{minipage}{.53\textwidth} %
\caption{The shared hyperparameters.}
\label{tab:shared_hp}
\centering
\begin{tabular}{|c|c|}
\hline
\textbf{Hyperparameters}                 & \textbf{values}\\ \hline
hidden layers of policy $\pi_\theta$      & {[}256,256{]} \\ \hline
hidden layers of $\pi^{\gD}$      & {[}256,256{]} \\ \hline
number of mixtures of $\pi^{\gD}$ & 3             \\ \hline
hidden layers of V-value $v_\varphi$      & {[}256,256{]} \\ \hline
hidden layers of advantage $A_\phi$       & {[}256,256{]} \\ \hline
activation function of networks           & ReLU          \\ \hline
NN optimizer                              & Adam          \\ \hline
NN learning rate                          & 3e-4          \\ \hline
discount factor $\gamma$                  & 0.99          \\ \hline
batch size                                & 512           \\ \hline
mixture coefficient $\zeta$               & 0.9           \\ \hline
max OOD IS ratio $\tilde{\epsilon}$       & 0.3           \\ \hline
number of OOD action samples       & 5           \\ \hline
\end{tabular}
\end{minipage}
\end{table}

\subsubsection{Full algorithm}
\label{sec:full_algo}
In this section, we present the full algorithm in Algorithm.~\ref{alg:cde full}. 

\begin{algorithm}[htb]
\caption{Full Algorithm of Conservative Density Estimation}
\label{alg:cde full}
\raggedright Initialize value functions $v_{\varphi},\tilde{A}_{\phi}$, behavior policy $\pi^\gD$, policy $\pi_\theta$. All the advantage functions will be subtracted by $\eta$ during computation as proved in Appendix~\ref{sec:derivation normalize}.\par
\begin{algorithmic}[1] 
\FOR{training iteration $i$}
\STATE $\triangleright$ \textit{policy evaluation and improvement}
\STATE Sample batch $\{(s_i, a_i, r_i, s_i')\}$ from $\gD$ and $n$ OOD actions $\{a^{(1)}, \dots, a^{(n)}\}$ for each $s$;
\STATE Update V-value $v_{\varphi}$ by Eq.(\ref{eq:v leanring});
\STATE Compute in-distribution advantage function $A_{\varphi}(s,a)$ via $v_\phi$;
\STATE Update regularized advantage $\tilde{A}_{\phi}$ by Eq.(\ref{eq:A learning});
\STATE Update distribution normalizer $\eta$ by gradient descent: $\eta\leftarrow \eta - \alpha_\eta(1-\E[w^*(s,a)])$.
\STATE Update $\pi^\gD$ by behavioral cloning.
\STATE $\triangleright$ \textit{policy extraction}
\IF{$i\geq$ warm-up steps}
\STATE Update policy $\pi_\theta$ by Eq.(\ref{eq: policy learning}) with entropy regularization.
\ENDIF
\ENDFOR
\end{algorithmic}
\end{algorithm}

\subsection{More experiment results}
\label{app:exp_results}

\subsubsection{Training curves}
The training curves of full dataset experiments are shown in fig.~\ref{fig:training_curve}. 
The training steps start from a non-zero number because of the warm-up step, i.e., we learn the policy after the value function almost converges. The warm-up step is 20,000 for the maze2d environment and 40,000 for other environments.

\begin{figure}[htp]
\centering
\includegraphics[width=1.0\linewidth]{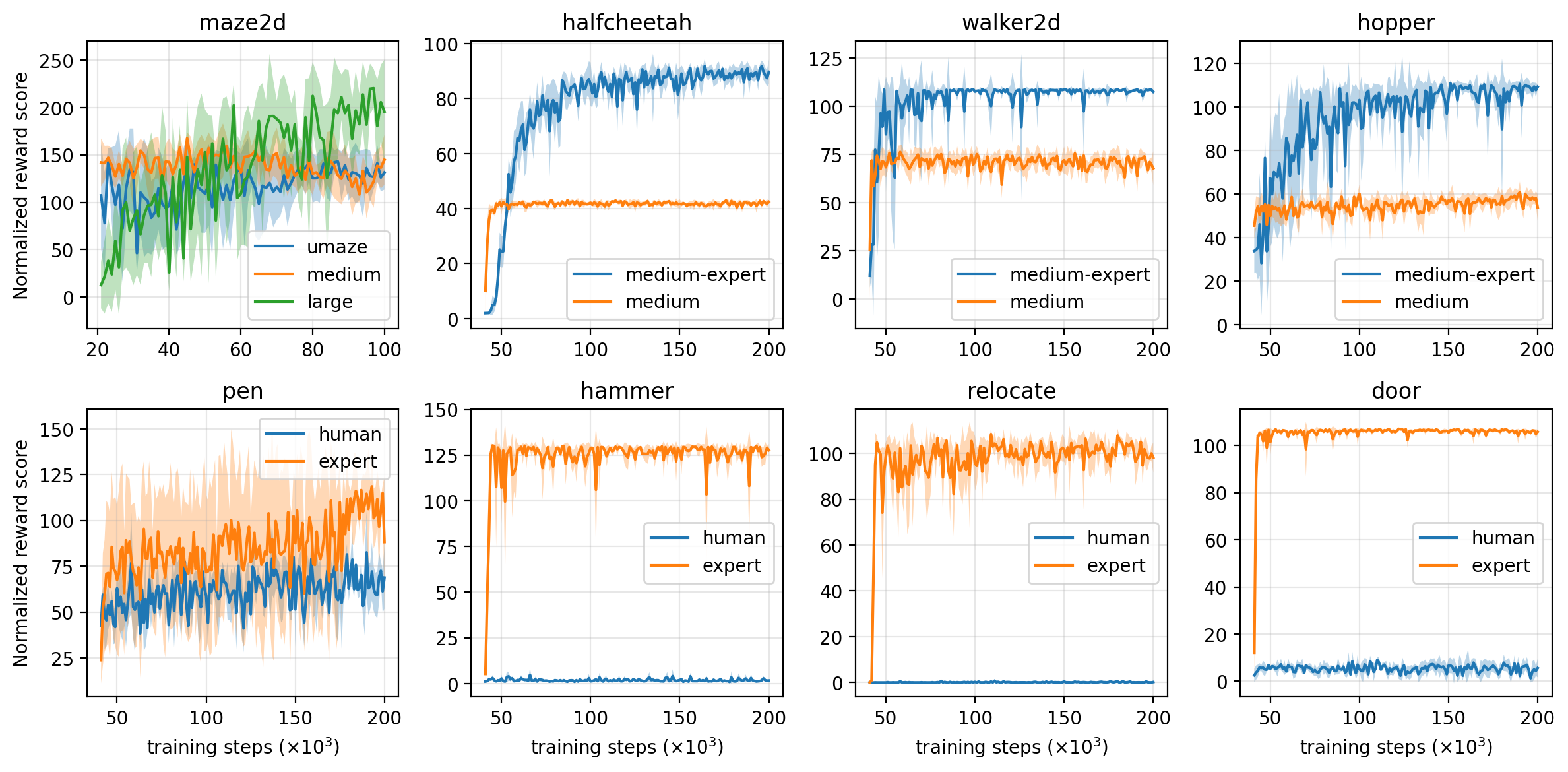}
\caption{The training curves of CDE. The shadow region indicates the standard deviation of mean values across different seeds. Here we report the normalized reward scores for MuJoCo tasks measured by dense rewards instead of success rate, which has been reported in previous tables.}
\label{fig:training_curve}
\vspace{-7pt}
\end{figure}

We can observe that our method converges extremely fast and is very stable during training. This is because CDE employs convex optimization to solve the value function and extracts the optimal policy in a manner of supervised learning. On the contrary, the previous methods (e.g., Q-learning-based methods~\citep{kumar2020conservative, kostrikov2021offline_iql}) are prone to over-fitting in training due to the interleaved optimization of value and policy~\citep{brandfonbrener2021offline}, which may lead to large compounded errors and performance decrease especially with long training steps.

\subsubsection{The experiment results on more tasks}\label{app:more_tasks}

We present more comparisons on MuJoCo "-medium-replay" tasks and Adroit "-cloned" tasks in table~\ref{tab:success_rate_mr}, \ref{tab:score_adroit_cloned} respectively.

\begin{table}[htp]
\vspace{-3pt}
\caption{\small Success rate (\%) of CDE against other baselines on sparse-MuJoCo 
"-medium-replay" tasks.}
\vspace{-3pt}
\label{tab:success_rate_mr}
\centering
\small
\begin{tabular}{@{}l|ccccc@{}}
\toprule
                          & BCQ      & CQL       & IQL      & TD3+BC    & CDE (Ours) \\ \midrule
halfcheetah-medium-replay & 91.0±2.2 & 48.0±14.9 & 70.7±3.7 & 61.0±16.3 & \textbf{95.2}±3.5   \\
walker2d-medium-replay    & 85.7±8.4 & 78.3±11.2 & 4.0±4.9  & 84.4±7.3  & \textbf{90.4}±5.0   \\
hopper-medium-replay      & 91.4±5.2 & 87.0±5.1  & 0.0±0.0  & 89.0±4.1  & \textbf{93.3}±4.6   \\ \bottomrule
\end{tabular}
\end{table}

\begin{table}[htp]
\vspace{-3pt}
\caption{\small Normalized scores of CDE against other baselines on Adroit "-cloned" tasks.}
\vspace{-3pt}
\label{tab:score_adroit_cloned}
\centering
\small
\begin{tabular}{@{}l|cccc@{}}
\toprule
                & BCQ  & CQL  & IQL  & CDE (Ours) \\ \midrule
pen-cloned      & 44.0 & 39.2 & 37.3 & \textbf{56.9}±13.2  \\
hammer-cloned   & 0.4  & 2.1  & 2.1  & \textbf{3.3}±1.7    \\
door-cloned     & 0.0  & 0.4  & \textbf{1.6}  & 0.3±0.1    \\
relocate-cloned & -0.3 & -0.1 & -0.2 & \textbf{0.4}±0.3    \\ \bottomrule
\end{tabular}
\end{table}

By the results listed above, we can observe that CDE exceeds most baselines on sparse MuJoCo "-medium-replay" and Adroit "-cloned" tasks.

\subsubsection{Comparison on sparse medium-expert MuJoCo tasks in scarce data setting}\label{app:subdataset_me}
The experiment results on MuJoCo medium-expert tasks are shown in fig.~\ref{fig:subdataset_medium_expert}. CDE obtains similar performances to OptiDICE and IQL on halfcheetah and hopper tasks and exceeds all others on walker2d task. Meanwhile, CDE has higher success rate with a small proportion of data.

\begin{figure}[htp]
\centering
\includegraphics[width=0.9\linewidth]{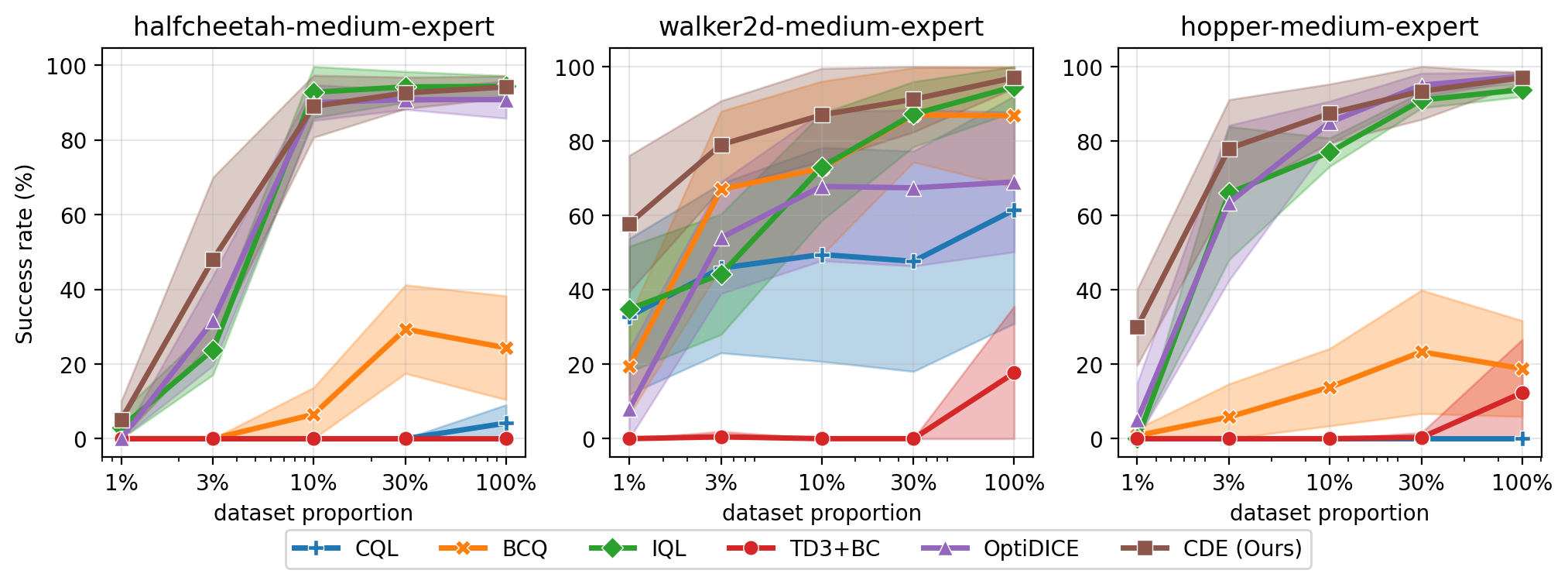}
\caption{The results on sub-datasets with different dataset sizes for MuJoCo medium-expert tasks.}
\label{fig:subdataset_medium_expert}
\vspace{-7pt}
\end{figure}

\subsubsection{The experiment results on original D4RL MuJoCo tasks}\label{app:dense_mujoco}

We provide the performances of CDE and comparison with baselines on dense-reward MuJoCo tasks in table~\ref{tab:dense_mujoco}.

We can observe that while CDE may not outperform the baselines in dense reward settings, it achieves a significantly higher success rate when reward is sparse, which indicates that the baseline performances are predominantly reliant on dense rewards and highlights the effectiveness of CDE for sparse-reward offline data.

\begin{table}[htp]
\vspace{-3pt}
\caption{\small Normalized scores on original dense-reward MuJoCo tasks.}
\vspace{-3pt}
\label{tab:dense_mujoco}
\centering
\small
\begin{tabular}{@{}l|ccccc@{}}
\toprule
                          & BCQ   & CQL   & IQL   & TD3+BC & CDE        \\ \midrule
halfcheetah-medium        & 40.7  & 49.1  & 47.4  & 48.3   & 43.3±2.9   \\
walker2d-medium           & 54.5  & 82.9  & 78.3  & 83.7   & 73.8±4.4   \\
hopper-medium             & 53.1  & 64.6  & 66.3  & 59.3   & 51.2±3.7   \\
halfcheetah-medium-expert & 64.7  & 85.8  & 86.7  & 90.7   & 75.6±7.2   \\
walker2d-medium-expert    & 110.9 & 109.5 & 109.6 & 110.1  & 107.7±10.4 \\
hopper-medium-expert      & 57.5  & 102.0 & 91.5  & 98.0   & 108.6±4.8  \\ \bottomrule
\end{tabular}
\end{table}

\subsubsection{Performances on sparse-reward goal-conditioned tasks}
To further illustrate the effectiveness of CDE on sparse-reward tasks, we test it on goal-reaching tasks from \citet{yang2022rethinking}, where the agent receives a "+1" reward only if it reaches a desired goal and a 0 reward otherwise. Follow \citet{ma2022offline}, we choose 5 relatively hard tasks and the offline datasets consist of expert and random data. We then compare our method with GCSL~\citep{ghosh2019learning}, WGCSL~\citep{yang2022rethinking} and GoFAR~\citep{ma2022offline}, where the former two methods are goal-conditioned imitation learning and the latter one is a goal-conditioned version of DICE method. 

Table~\ref{tab:goal_conditioned} shows the average reward returns comparison. The performances of baselines are adopted from ~\citet{ma2022offline}. Note that all compared baselines use goal relabeling during training while our method ignores the goal and only uses the states to train policy. We can find that CDE obtains the highest rewards on 3 tasks and comparable rewards on FetchPush. Meanwhile, CDE does not use goal state during training and may fail to stitch transitions from different trajectories, which can be a part of reason for failure on FetchSlide. As this paper focuses on the standard offline RL setting, it can be a future direction to extend existing method to goal-conditioned setting.

\begin{table}[htp]
\vspace{-3pt}
\caption{The performances comparison on goal-conditioned tasks.}
\label{tab:goal_conditioned}
\centering
\begin{tabular}{@{}l|cccc@{}}
\toprule
Task       & GCSL     & WGCSL    & GoFAR    & CDE \\ \midrule
FetchReach & 20.9±2.8 & 21.9±2.1 & 28.2±0.6 & \textbf{29.1±1.7}   \\
FetchPick  & 8.9±3.1  & 9.8±2.6  & 19.7±2.6 & \textbf{27.7±1.4}   \\
FetchPush  & 13.4±3.0 & 14.7±2.7 & \textbf{18.2±3.0} & 16.6±2.0   \\
FetchSlide & 1.8±1.3  & \textbf{2.7±1.6}  & 2.5±1.4  & 1.1±1.0    \\
HandReach  & 1.4±2.2  & 6.0±4.8  & 11.5±5.3 & \textbf{17.0±2.9}   \\ \bottomrule
\end{tabular}
\end{table}

\subsubsection{Comparison on computation time}
One drawback of our method compared to baselines is that CDE may require more computation time in training. Therefore, we compare baselines and our method on halfcheetah-medium-expert-v2 task. We use the server with AMD EPYC 7542 32-Core CPU and A5000 GPU. To compare different methods fairly, we report the time of 200,000 steps with batch size = 512 and defaulted hyper-parameters for baselines. The final results are shown in table~\ref{tab:time}.

\begin{table}[htp]
\vspace{-3pt}
\caption{The computation time comparison.}
\label{tab:time}
\centering
\begin{tabular}{@{}cccccc@{}}
\toprule
Method                 & BCQ & CQL & IQL & TD3+BC & CDE \\ \midrule
Computation time (min) & $\sim$200 & $\sim$150 & $\sim$40  & $\sim$30     & $\sim$90  \\ \bottomrule
\end{tabular}
\end{table}

\subsection{More parameter studies and ablation studies}\label{app:param_study}
In this section, we present more parameter studies w.r.t the hyperparameters in our method. We mainly choose Maze2d tasks as our testbench as they are more sensitive to the hyperparameters.

\subsubsection{Parameter study on max OOD IS ratio}
Although we visualize the stationary state distribution of policies with different $\tilde{\epsilon}$ in maze2d-large task, we provide performances on all maze2d tasks in table ~\ref{tab:tilde_eps} as a supplement. As $\tilde{\epsilon}$ decreases, constraint on unseen region gets stronger. There is a performance gain with proper conservatism but it also ruins the final performance if the constraint is too conservative.

\begin{table}[htp]
\vspace{-3pt}
\caption{The performances on maze2d tasks with different $\tilde{\epsilon}$.}
\label{tab:tilde_eps}
\centering
\begin{tabular}{@{}l|cccc@{}}
\toprule
$\tilde{\epsilon}$          & 3.0        & 0.3        & 0.03       & 0.003      \\ \midrule
maze2d-umaze  & 121.7±12.5 & 134.1±10.4 & 132.7±11.4 & 114.5±17.3 \\
maze2d-medium & 154.2±14.9 & 146.1±13.1 & 135.2±12.3 & 89.4±29.5  \\
maze2d-large  & 189.7±19.7 & 210.0±13.5 & 174.5±24.7 & 51.6±37.9  \\ \bottomrule
\end{tabular}
\end{table}

\subsubsection{Parameter study on number of OOD action samples}
We use sampling on OOD actions in eq.(\ref{eq:A learning}) to approximate the out-of-distribution constraint. To study how the action sample number $N$ influences the final performance, we test our method with different $N$ on maze2d tasks and show the results in table~\ref{tab:N_act}. Theoretically, the approximation error on OOD constraint will decrease as we increase the $N$. In practice, we find there is no significant performance improvement when $N\geq 3$. Therefore, we simply set $N=5$ for all tasks in practice.

\begin{table}[htp]
\vspace{-3pt}
\caption{The performances on maze2d tasks with different number of OOD action samples $N$.}
\label{tab:N_act}
\centering
\begin{tabular}{@{}l|cccc@{}}
\toprule
N          & 1          & 3          & 5 (adopted) & 10         \\ \midrule
maze2d-umaze  & 115.2±6.9  & 131.7±3.5  & 134.1±10.4  & 141.1±10.1 \\
maze2d-medium & 132.6±11.9 & 142.9±13.8 & 146.1±13.1  & 146.8±14.8 \\
maze2d-large  & 179.4±21.5 & 202.0±24.7 & 210.0±13.5  & 209.7±14.8 \\ \bottomrule
\end{tabular}
\end{table}

\subsubsection{Parameter study on in-distribution width}\label{app:delta_a_param_study}
We use the standard deviation of the behavior policy output $\sigma^\gD(s)$ as a measurement of the width of in-distribution region. However, there may still be some concerns on the overlap between OOD action space and in-distribution region. Therefore, we conduct a parameter study on $\Delta a$.

\begin{table}[H]
\vspace{-3pt}
\caption{The performances on maze2d tasks with different $\Delta a$.}
\label{tab:delta_a}
\centering
\begin{tabular}{@{}llll@{}}
\toprule
$\Delta a / \sigma^\gD(s)$ & 0.3        & 0.7        & 1.0 (adopted) \\ \midrule
maze2d-umaze               & 131.3±16.4 & 137.7±9.3 & 134.1±10.4    \\
maze2d-medium              & 149.1±12.7 & 151.2±11.0 & 146.1±13.1    \\
maze2d-large               & 211.5±14.3 & 204.0±17.3 & 210.0±13.5    \\ \bottomrule
\end{tabular}
\vspace{-5pt}
\end{table}

As listed in table~\ref{tab:delta_a}, there is no significant performance drop when decreasing $\Delta a / \sigma^\gD(s)$ (i.e., the higher risk of overlap between OOD and in-distribution region). This is because the advantage learning step in eq.(\ref{eq:A learning}) involves both in-distribution and OOD regression. When an in-distribution action is wrongly recognized as OOD action, its value function will exist both in two terms. Therefore, the value function of in-distribution action will not be negatively affected while the OOD actions in $\mathcal{A}_{ood}$ can always be constrained.

From the above results, we can observe that the performance of our method is robust to the most hyperparameters except $\tilde{\epsilon}$, which controls the degree of conservativeness on unseen regions.  

\subsubsection{Ablation study on warm-up stage}

We adopt warm-up where we only update the value function and stop policy updating. To investigate its influence on final performances, we present the results of ablation study on Maze2d tasks in table~\ref{tab:warmup}.

\begin{table}[H]
\vspace{-3pt}
\caption{The performances on maze2d tasks with/without warm-up stage.}
\label{tab:warmup}
\centering
\begin{tabular}{@{}lll@{}}
\toprule
  &  with warm-up        & w.o. warm-up \\ \midrule
maze2d-umaze               & 134.1±10.4 & 136.2±13.2    \\
maze2d-medium              & 146.1±13.1 & 141.7±17.3    \\
maze2d-large               & 210.0±13.5 & 205.4±15.4    \\ \bottomrule
\end{tabular}
\vspace{-5pt}
\end{table}

We can find that the warmup stage does not influence the final performances significantly.

\end{document}